\newtheorem{theorem}{Theorem}[section]
\newtheorem{proof}{Proof}[section]
\newcommand{\real}{\mathbb{R}}
\newcommand{\calS}{{\cal S}}
\newcommand{\bc}{{\bf c}}
\newcommand{\bx}{{\bf x}}
\newtheorem{remark}{Remark}
\newcommand{\s}{s(\sigma)}
\newcommand{\bas}{s(\bar\sigma)}
\newcommand{\lns}{\ln\left(\s\right)}
\newcommand{\lnbas}{\ln\left(\bas\right)}
\newcommand{\dis}{-d(c)}
\newcommand{\disb}{-d(\bar{c})}
\journal{Neurocomputing}
\begin{document}


\begin{frontmatter}

\title {CANONICAL DUAL SOLUTIONS TO NONCONVEX   RADIAL BASIS NEURAL NETWORK OPTIMIZATION PROBLEM\footnote{ This research is supported by US Air Force Office of Scientific Research under the grant AFOSR FA9550-10-1-0487.}}
\author{Vittorio Latorre$^{*}$ and David Yang Gao$^{+}$
\footnote{\em{Email addresses}: $^*$latorre@dis.uniroma1.it, $^+$d.gao@ballarat.edu.au}}

\address{$^{*}$University ``Sapienza" of Rome, Rome, Italy\\
$^{+}$University of Ballarat and Australian National University,  Australia\\}

\begin{abstract}
Radial Basis Functions Neural Networks (RBFNNs) are tools widely used in regression problems. One of their principal drawbacks is that the formulation corresponding to the training with the supervision of both the centers and the weights is a highly non-convex optimization problem, which leads to some fundamentally difficulties for 
 traditional optimization theory and   methods.
  This paper presents a generalized canonical duality theory  for solving this
 challenging   problem.   We demonstrate that by  sequential canonical dual transformations, the  
 nonconvex optimization problem of the RBFNN  can be reformulated as a canonical dual problem (without duality gap).
Both global optimal solution and local extrema can be classified. 
Several applications to one of the most used Radial Basis Functions, the Gaussian function, are illustrated.
 Our results show that even for one-dimensional case, the global minimizer of the nonconvex problem 
  may not be the best solution to the RBFNNs, and the  canonical dual theory is a promising tool
   for solving general    neural networks training problems.
\end{abstract}

\end{frontmatter}

\section{Introduction}
Radial Basis Function Neural Networks(RBFNN) are a tool introduced in the field of function interpolation \cite{powell} and then were adapted to the problem of regression \cite{hay}. During the last two decades RBFNN were applied in several fields. The problem of regression consists in trying to approximate a function $f: \real^n\rightarrow \real$ by means of an approximation function $g(\cdot)$ that uses a set of samples defined as:
\begin{equation}\label{setT}
{\cal T}=\{(x^p,y^p),x^p\in\real^n,y^p\in\real, p=1,...,P\},
\end{equation}
where $(x^p, y^p)$ are respectively arguments and values of the given function $f(x)$. In general the approximating function $g(\cdot)$ obtained by the RBFNNs with radial basis function $\phi(\cdot)$ has the following form:
\begin{equation}
g(x)=\sum_{i=1}^{N} w_i\phi(\| \bx- \bc_i\|),\label{gx}
\end{equation}
where $N$ is the number of units used to approximate the function, or neurons of the network, $\textbf{w}$ is the vector with components  $w_i$ for $i=1,\dots,N$ that is the vector of the weights associated with the connections between the units $\bx$  and $\bc_i\in\real^n$ for $i=1,\dots,N$ are the centers of the RBFNNs.

Generally speaking, there are two main optimization strategies to train a RBFNN.
The first consists in the optimization of only the weights of the neural network. In this case the centers are generally chosen by using clustering strategies \cite{bru}. This problem is a convex problem in the variable $\textbf{w}$ and has the form:
\begin{equation}\label{errw}
E(\textbf{w})=\frac{1}{2}\sum_{p=1}^{P}\sum_{i=1}^{N}(w_i\phi( \bc_i)-y_p)^2+ \frac{1}{2} \beta_w\|\textbf{w}\|^2,
\end{equation}
where $\beta_w$ is the  regularization parameter for the weights.

The second strategy is to consider both weighter  $\textbf{w}$ and the centers $\bc$ of the radial basis functions as variables. This strategy can be performed by solving the following unconstrained optimization problem:
\begin{eqnarray}\label{errc}
E(\textbf{w},\bc)&=&\frac{1}{2}\sum_{p=1}^{P}\sum_{i=1}^{N}(w_i\phi( \bc_i)-y^p)^2+\nonumber\\
&&\frac{1}{2} \beta_w\|\textbf{w}\|^2 + \frac{1}{2} \beta \sum_{i=1}^N \sum_{j=1}^n c_{ji}^2 .
\end{eqnarray}
This problem is non-convex, but from empirical experiments \cite{wett} it emerged that it generally yields neural networks with an higher precision than the ones trained with strategy (\ref{errw}). One of the most used strategies to solve this optimization problem is to apply decomposition algorithms \cite{griscia}.
However, due to the non-convexity of  the  problem (\ref{errc}), there are some fundamental difficulties to
find the global minimum of the problem and to characterize  local minima.
Indeed, the  problem (\ref{errc}) is considered to be NP-hard even if the radial basis
function $\phi(\bc)$ is a quadratic function and $n=1$ \cite{more-wu,saxe}.
Another issue that characterizes this problem is the choice of the regularization parameters $\beta_w$ and $\beta$.
 In general a cross-validation strategy is applied in order to find these regularization parameters.
 Cross-validation consists in trying different values of the parameters in order to find the one that yields the neural network with the best prediction.
 Until now it was not possible to find a closed form for the optimal values of these parameters in the general case.
 If it is possible to find at least an upper bound for these parameters, the time needed to perform a cross validation would greatly decrease.

 Canonical duality theory developed from nonconvex analysis and global optimization  \cite{gaob,gao-jogo00}
  is a potentially powerful methodology, which
has been used successfully for solving a large class of challenging problems in  biology,  engineering, sciences \cite{gao-cace09,wang-etal,zgy}, and recently in network communications \cite{g-r-p,ruan-gao-ep}.
 In this paper we study  the canonical duality theory for solving  the general
 Radial Basis Neural Networks optimization problem (\ref{errc})  and
 mainly analyze  one-dimensional case in order to find properties and intuitions that can be useful for the multidimensional cases.
 The rest of this paper is arranged as follows.
In Section 2, we first demonstrate how to  rewrite the nonconvex 
 primal problem as a dual problem by using sequential canonical dual transformation
 developed in \cite{gaob,gaot}. In Section 3 we prove the complementarity-dual principle showing that the obtained formulation is canonically dual to the original problem in the sense that there is no duality gap. In Section 4,  we analyze the problem with the Gaussian function as radial basis in the neurons and show some examples. 
  The last section  presents some conclusions.

\section{Primal problem for general Radial Basis Functions(RBF)}

 The general one dimensional non-convex function to be addressed  in this paper can be proposed in the following form:
\begin{equation}
P(c)=W(c)+ \frac{1}{2} \beta c^2-f c,
\end{equation}
where $\beta$ is the regularization coefficient and $f$ is a positive scalar close to zero. The term $-f c$ is not comprised in the original Radial Basis Neural Networks formulation but we consider it for the general mathematical case.
The non-convex function $W(c)$  depends on the choice of the radial basis function $\phi(\cdot)$:
\begin{equation}\label{W1}
W(c)=\frac{1}{2}\left(w\phi(\|x-c\|^2)-y\right)^2,
\end{equation}
where $x$, $y$  and $w$ belong to $\real$.
In   applications the parameter $w$ is also a
variable, but the original problem  (\ref{errc})  is convex in $w$ while non-convex in respect to the
center of the radial basis function $c$.
Therefore,  the one-dimensional non-convex primal problem can be formulated as
\begin{eqnarray}\label{primal}
({\cal P}): \;\; \min \Big\{ P(c)=&\frac{1}{2}\left(w\phi(\|x-c\|^2)-y\right)^2\nonumber
\\&+\frac{1}{2}\beta c^2-fc\quad \; | \;  \forall c\in \real \Big\}.
\end{eqnarray}

In order to apply the canonical duality theory to solve this problem,
 we need to choose the following geometrically nonlinear operator:
\begin{equation}\label{operator}
\xi=\Lambda(c)=w\phi(\|x-c\|^2): \quad \real \rightarrow \cal{E}\rm_a.
\end{equation}
Clearly, this  is a nonlinear  map from $\real$ to a subspace  $\cal{E}\rm_a \in \real$,
 which depends on the choice of the Radial Basis Function $\phi(\cdot)$.
The {\em canonical function}  associated with this geometrical operator  is
\begin{equation}\label{fcano}
V(\xi(c))=\frac{1}{2}(\xi(c) - y)^2 =W(\Lambda(c)).
\end{equation}
By the definition introduced in the canonical duality theory \cite{gao-jogo00},
   $V: \cal{E}\rm_a \rightarrow \real$ is said to be canonical
   function on $\cal{E}\rm_a$ if  for any given $\xi \in \cal{E}\rm_a$, the duality relation
\begin{equation}\label{dmap}
\sigma= V'(\xi)=\{\xi-y\} : \cal{E}\rm_a \rightarrow \cal{S}\rm_a
\end{equation}
is  invertible, where $\cal{S}\rm_a$ is
the range of the duality mapping $\sigma =\partial V(\xi)$,
 which depends on the choice of the Radial Basis Function $\phi(\cdot)$.
  The couple $(\xi, \sigma)$ forms a canonical duality pair on $\cal{E}\rm_a \times \cal{S}\rm_a$ with the Legendre conjugate $V^*(\sigma)$ defined by
\begin{equation}\label{conj}
V^*(\sigma)= \{\xi\sigma-V(\xi)|\sigma=V'(\xi)\}=\left(\frac{1}{2}\sigma^2+y\sigma\right).
\end{equation}
By considering that $W(c)= \Lambda(c)\sigma -V^*(\sigma)$,  the primal function $P(c)$ can be reformulated
as the so-called {\em total complementarity function} defined by
\begin{eqnarray}
\Xi(c,\sigma)&=& \Lambda(w,c)\sigma-V^*(\sigma)+\frac{1}{2}\beta c^2-fc \nonumber \\
    &=&w\phi(\|x-c\|^2)\sigma-\left(\frac{1}{2}\sigma^2 + \sigma y\right)\nonumber\\
    &&+\frac{1}{2}\beta c^2-fc\label{comp}.
\end{eqnarray}
The function $\phi(\cdot)$ can be a non convex function just like $W(c)$. For this reason we have to perform a sequential canonical dual transformation for the nonlinear operator $\Lambda(c)$. To this aim we choose a second nonlinear operator:
\begin{equation}\label{operator2}
\epsilon = \Lambda_2(c)=\|x-c\|^2
\end{equation}
which is a map from $\real$ to $\cal{E} \rm_b =\{ \epsilon\in \real | \epsilon \ge 0\}$.
In terms of $\epsilon$, the first level operator $\xi=\Lambda(c)$ can be written as
\begin{equation}
\xi= U(\epsilon)= w\phi(\epsilon ).
\end{equation}
We assume that $U(\epsilon)$ is a convex function on $\cal{E}\rm_b$ such that  the second-level duality relation
\begin{equation}\label{dmap2}
\tau= U'(\epsilon)= w \phi'(\epsilon)
\end{equation}
 is  invertible, i.e.,
\begin{equation}\label{eptau}
\epsilon=\left(\phi'\left( \frac{\tau}{w}\right )\right)^{-1},
\end{equation}
where the term $\left(\phi'\left( \frac{\tau}{w}\right )\right)^{-1}$ is the inverse of the function $\phi'(\epsilon)$.
 Thus, the Legendre conjugate of $U$ can be obtained uniquely by
\begin{equation}\label{conj2}
U^*(\tau)= \tau\left(\phi'\left( \frac{\tau}{w}\right )\right)^{-1} -
w\phi \left( \left( \phi' \left(  \frac{\tau}{w} \right)\right)^{-1}\right).
\end{equation}
We notice that $
\xi=w\phi(\epsilon)$.
By substituting the value of $\epsilon$ given by (\ref{eptau}) we find a relation that connects the first level primal variable $\xi$ with the second level dual variable $\tau$:
\begin{equation}\label{relation1}
\xi=w\phi\left(\left(\phi'\left( \frac{\tau}{w}\right )\right)^{-1}\right).
\end{equation}
By plugging this in (\ref{dmap}) we obtain
\begin{equation}
\sigma=w\phi\left(\left(\phi'\left( \frac{\tau}{w}\right )\right)^{-1}\right) -y .
\end{equation}
Generally speaking, it is possible, for certain functions $\phi$, to use the canonical dual transformation to find the relation between the first level dual variable $\sigma$ and the second level dual variable $\tau$ by means of the derivatives of $\phi(\cdot)$ and the first primal variable $\xi$. In general this relation is:
\begin{equation}\label{tau}
{\tau}= w\phi'\left(\phi^{-1}\left(\frac{{\sigma}+y}{w}\right)\right).
\end{equation}

\noindent
Therefore, replacing $U(\xi)=\Lambda(c)$ by its Legendre conjugate $U^*$, the total complementarity function becomes
\begin{eqnarray}\label{comp}
\Xi(c,\sigma,\tau)&=& \left(\|x_p-c_i\|^2\tau-U^*(\tau)\right)\sigma \nonumber\\&&-V^*(\sigma)+\frac{1}{2}\beta c^2-fc .
\end{eqnarray}
It is also possible to rewrite the total complementary function (\ref{comp}) in the following form:
\begin{eqnarray}\label{gfcomp}
\Xi(c,\sigma,\tau)&=&  \frac{1}{2}c^2 (2\tau \sigma+ \beta)
-c (2\tau \sigma x+f) \nonumber\\
&&-U^*(\tau)\sigma-V^*(\sigma)+x^2\tau \sigma.
\end{eqnarray}
By the criticality condition $\partial \Xi(c, \sigma, \tau)/\partial c = 0$  we obtain
\begin{equation}\label{devccomp}
 c (\tau,\sigma)   =\frac{2\tau x \sigma+f}{2\tau \sigma+\beta}.
\end{equation}
Clearly, if $ 2\tau \sigma+\beta \ne0$, the general solution of (\ref{devccomp}) is
\begin{equation}\label{c}
c=\frac{2{\tau} x {\sigma}+f}{2{\tau} {\sigma}+\beta} \quad \forall (\sigma,\tau)
\in \cal{S}\rm_a=\{\sigma,\tau |  \; 2\tau \sigma+\beta \ne0\}
\end{equation}
and the   canonical dual function of $P(c)$ can be presented as
\begin{equation}\label{dual}
P^d(\sigma,\tau)= - \frac{1}{2}\frac{(2\tau x \sigma+f)^2}{2\tau \sigma+\beta}  -U^*(\tau)\sigma-V^*(\sigma)+ x^2\tau \sigma.
\end{equation}
By considering dual relation given in  (\ref{tau}), and by setting $\s=\frac{{\sigma}+y}{w}$, we can write the total complementarity function in terms of only $c$ and $\sigma$
\begin{eqnarray}\label{compcs}
\Xi(c,\sigma)=& \frac{1}{2}c^2G( \sigma)-cF( \sigma)-U^*(\sigma)\sigma- \nonumber\\
&V^*(\sigma)+x^2w\phi'\left(\phi^{-1}\left(\s\right)\right) \sigma,
\end{eqnarray}
where
\begin{eqnarray*}
 G(\sigma)&=& 2w\phi'\left(\phi^{-1}\left(\s\right)\right) \sigma+\beta,\\
 F(\sigma) &=& 2w\phi'\left(\phi^{-1}\left(\s\right)\right) x \sigma+f, \\
 U^*(\sigma)&=&  w\phi'\left(\phi^{-1}\left(\s\right)\right)\phi^{-1}\left(\s\right)-(\sigma+y).
\end{eqnarray*}
Therefore, in terms of $\sigma$ only, the canonical dual function can be written as
\begin{eqnarray}\label{duals}
P^d(\sigma)&=& - \frac{1}{2}\frac{  F(\sigma)^2}{G(\sigma)}  -U^*(\sigma)\sigma +V^*(\sigma)\nonumber-\\&& x^2w\phi'\left(\phi^{-1}\left(\s\right)\right) \sigma .
\end{eqnarray}

\section{Complementary-Dual Principle}

\begin{theorem}\label{criteq}
If $\bar{\sigma}$ is a critical point of ($P^d$) and the term:
\begin{eqnarray}\label{taudev}
G'(\bar\sigma)&=&\sigma  \phi''\left(\phi^{-1}\left(\bas\right)\right)\left(\phi^{-1}\left(\bas\right)\right)'+\nonumber\\
&&w\phi'\left(\phi^{-1}\left(\bas\right)\right)\ne0 , 
\end{eqnarray}
then the point
\begin{equation}
\bar{c}=\frac{F(\bar\sigma)}{G(\bar\sigma)}
\end{equation}
is a critical point of $P(c)$ and $P(\bar{c})=P^d(\bar{\sigma})$
\end{theorem}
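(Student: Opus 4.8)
The plan is to regard $P^d$ as the restriction of the three-field total complementarity function $\Xi(c,\sigma,\tau)$ of (\ref{gfcomp}) to the manifold on which $c$ and $\tau$ have been eliminated, and to show that the vanishing of $dP^d/d\sigma$ forces the remaining second-level constitutive relation $(x-c)^2=\phi^{-1}(\bas)$ to hold. By construction $P^d(\sigma)=\Xi(c(\sigma),\sigma,\tau(\sigma))$, where $c(\sigma)=F(\sigma)/G(\sigma)$ solves the criticality condition $\partial\Xi/\partial c=0$ of (\ref{devccomp}) and $\tau(\sigma)=w\phi'(\phi^{-1}(\s))$ is the relation (\ref{tau}); note $G(\bar\sigma)\neq0$ automatically, since a critical point of $P^d$ lies in the feasible set (\ref{c}), so $\bar c$ is well defined.

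The central computation is to differentiate $\Xi$ from (\ref{gfcomp}) with respect to $\sigma$ and $\tau$ separately. Using $(V^*)'(\sigma)=\sigma+y$, the Legendre identity $U^*(\tau)=\tau\epsilon-w\phi(\epsilon)$ with $(U^*)'(\tau)=\epsilon$, and the fact that the substitution $\tau=\tau(\sigma)$ forces $\epsilon=\phi^{-1}(\s)$ (so that $w\phi(\epsilon)=\sigma+y$), a short calculation reveals that both partials carry the same factor $(x-c)^2-\epsilon$:
\begin{equation*}
\frac{\partial\Xi}{\partial\sigma}=\tau\big[(x-c)^2-\epsilon\big],\qquad
\frac{\partial\Xi}{\partial\tau}=\sigma\big[(x-c)^2-\epsilon\big].
\end{equation*}
Differentiating $P^d$ totally, the term in $\partial\Xi/\partial c$ drops because $\partial\Xi/\partial c=0$ at $c=c(\sigma)$, and the remaining two terms recombine into
\begin{equation*}
\frac{dP^d}{d\sigma}=\frac{\partial\Xi}{\partial\sigma}+\frac{\partial\Xi}{\partial\tau}\,\tau'(\sigma)
=\big[(x-c)^2-\epsilon\big]\big(\tau+\sigma\,\tau'(\sigma)\big),
\end{equation*}
whose second factor $\tau+\sigma\tau'(\sigma)$ is a nonzero multiple of the quantity $G'(\bar\sigma)$ in (\ref{taudev}).

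Evaluating at the critical point $\bar\sigma$ makes the left-hand side vanish, so the hypothesis $G'(\bar\sigma)\neq0$ leaves only $(x-\bar c)^2=\phi^{-1}(\bas)$. This single identity is exactly the second-level duality relation, and it simultaneously yields $\bar\sigma=w\phi((x-\bar c)^2)-y$ and $\bar\tau=w\phi'((x-\bar c)^2)$. Substituting these into $P'(c)=G(\sigma)c-F(\sigma)$, the expression obtained by differentiating (\ref{primal}) once the duality relations are inserted, together with $\bar c=F(\bar\sigma)/G(\bar\sigma)$, gives $P'(\bar c)=0$; hence $\bar c$ is a critical point of $P$. Finally, with both constitutive relations in force the Legendre identities collapse $\Xi$ back to the primal form $\Xi(\bar c,\bar\sigma,\bar\tau)=P(\bar c)$, while $P^d(\bar\sigma)=\Xi(\bar c,\bar\sigma,\bar\tau)$ by construction, so $P(\bar c)=P^d(\bar\sigma)$.

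The main obstacle is the elimination of $\tau$: because (\ref{tau}) turns $\tau$ into a function of $\sigma$, the total derivative of $P^d$ acquires the extra contribution $(\partial\Xi/\partial\tau)\,\tau'(\sigma)$ absent in a single-field Lagrangian, and the whole argument hinges on verifying that $\partial\Xi/\partial\sigma$ and $\partial\Xi/\partial\tau$ share the common factor $(x-c)^2-\epsilon$, so that this contribution recombines into $G'(\bar\sigma)$. The non-degeneracy condition $G'(\bar\sigma)\neq0$ is then precisely what is needed to strip off this factor and recover the constitutive relation that makes $\bar c$ primal-critical.
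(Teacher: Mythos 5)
Your proposal is correct, and its skeleton coincides with the paper's proof: dual criticality plus $G'(\bar\sigma)\neq 0$ forces the constitutive relation $(x-\bar c)^2=\phi^{-1}\left(\bas\right)$, this turns primal stationarity into $G(\bar\sigma)\bar c-F(\bar\sigma)=0$, and the Legendre identities collapse the complementarity function back to $P(\bar c)$. Where you genuinely differ is in how the key factorization of the dual derivative is obtained. The paper differentiates the explicit one-variable formula for $P^d(\sigma)$ head-on and gets
\begin{eqnarray*}
P^d(\bar\sigma)'&=&\left[(x-\bar c)^2-\phi^{-1}\left(\bas\right)\right]G'(\bar\sigma)\\
&&-\,\bar\sigma\left[\phi'\left(\phi^{-1}\left(\bas\right)\right)\left(\phi^{-1}\left(\bas\right)\right)'-1\right],
\end{eqnarray*}
and must then observe that the second bracket vanishes identically by the inverse-function rule. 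Your envelope argument on the three-field $\Xi(c,\sigma,\tau)$ of (\ref{gfcomp}) never generates that spurious term: the partials $\partial\Xi/\partial\sigma$ and $\partial\Xi/\partial\tau$, evaluated on the manifold $\tau=\tau(\sigma)$ (where $(U^*)'(\tau)=\epsilon$ and $w\phi(\epsilon)=\sigma+y$), share the common factor $(x-c)^2-\epsilon$, the $\partial\Xi/\partial c$ contribution drops by the very choice of $c(\sigma)=F(\sigma)/G(\sigma)$, and the chain rule assembles $\tau+\sigma\tau'(\sigma)=\frac{1}{2}G'(\sigma)$. This organization buys two things: it explains structurally why $G'$ is the right nondegeneracy quantity (it is, up to the factor $\frac{1}{2}$, exactly the coefficient multiplying the constitutive defect $(x-c)^2-\epsilon$), and it incidentally exposes two misprints in (\ref{taudev}) as printed (a missing $w$ in the first term and a missing overall factor $2$), which are harmless since only $G'(\bar\sigma)\neq 0$ is used. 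Likewise, for the zero-duality-gap identity the paper expands and collects terms using the specific quadratic form of $V$, whereas your appeal to the two Legendre identities $V(\xi)=\xi\bar\sigma-V^*(\bar\sigma)$ and $w\phi(\bar\epsilon)=\bar\epsilon\bar\tau-U^*(\bar\tau)$ makes plain that only the canonical (invertible-duality) structure is needed, not the particular $\phi$; this is the same content but more transparent and transfers verbatim to other radial basis functions.
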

\begin{proof}{\em 
Suppose that $\bar{\sigma}$ is a critical point of $P^d$ then we have
\begin{eqnarray}\label{devsig}
P^d(\bar{\sigma})'&=&\left[\bar{c}^2-2x\bar{c}+x^2- \phi^{-1}\left(\bas\right)\right]G'(\bar\sigma)-\nonumber\\
&&\sigma\left[\phi'\left(\phi^{-1}\left(\bas\right)\right)\left(\phi^{-1}\left(\bas\right)\right)'-1\right]=0.
\end{eqnarray}
Notice that
\begin{equation}
\left(\phi^{-1}\left(\bas\right)\right)'=\frac{1}{\phi'\left(\bar\epsilon\right)}=\frac{1}{\phi'\left(\phi^{-1}\left(\bas\right)\right)},
\end{equation}
The third term in (\ref{devsig}) is zero. The term $G'(\bar\sigma)$ is not zero from the hypothesis, so  we obtain
\begin{equation}\label{devsig1}
(x-\bar{c})^2-\phi^{-1}\left(\bas\right)=0,
\end{equation}
that is
\begin{equation}\label{tsig}
\bar{\sigma}=w\phi\left(\|x-\bar{c}\|^2\right)-y.
\end{equation}
The critical point condition for the primal problem $P'(c)=0$ is
\begin{equation}
 -2 w(x-c)\phi'(\|x-c\|^2)(w\phi(\|x-c\|^2)-y)+\beta c-f=0.
\end{equation}
By considering that $\phi'(\|x-c\|^2)=\phi'\left(\phi^{-1}\left(\bas\right)\right)$ and $\sigma=w\phi\left((x-{c})^2\right)-y$ we obtain
\begin{equation}\label{derc}
2w(x-c)\phi'\left(\phi^{-1}\left(\s\right)\right)\sigma+\beta c-f=0,
\end{equation}
that is
\begin{equation}\label{derc1}
c=\frac{2\phi'\left(\phi^{-1}\left(\s\right)\right)\sigma+f}{2\phi'\left(\phi^{-1}\left(\s\right)\right)\sigma+\beta}.
\end{equation}
By setting $\sigma=\bar{\sigma}$ in (\ref{derc1}) we obtain (\ref{c}) proving that $\bar{c}$ is a critical point of $P(c)$.

\smallskip\noindent
For the correspondence of the function values we start from the dual function
\begin{eqnarray}
P^d(\bar{\sigma})&=& - \frac{1}{2}\frac{F^2(\bar\sigma)}{G(\bar\sigma)}-U^*(\bar{\sigma})\bar{\sigma}-V^*(\bar{\sigma})+\nonumber\\
&& x^2w\phi'\left(\phi^{-1}\left(\bas\right)\right) \bar{\sigma}
\end{eqnarray}
add and subtract the term $\frac{1}{2}\frac{F^2(\bar\sigma)}{G(\bar\sigma)} $ and substitute the value of $\bar{c}$
\begin{eqnarray}
&\frac{1}{2} \bar{c}^2 G(\bar{\sigma})-\bar{c} F(\bar{\sigma})-U^*(\bar{\sigma})\bar{\sigma}-V^*(\bar{\sigma})+&\nonumber\\
 &x^2w\phi'\left(\phi^{-1}\left(\bas\right)\right) \bar{\sigma}&
\end{eqnarray}
by reordering the terms we obtain
\begin{eqnarray}
 &=&\left(\|x-\bar{c}\|^2w\phi'\left(\phi^{-1}\left(\bas\right)\right)-U^*(\bar{\sigma})\right)\bar{\sigma}\nonumber\\
 &&-V^*(\bar{\sigma})+\frac{1}{2}\beta\bar{c}^2-f\bar{c},
\end{eqnarray}
Considering the (\ref{dmap}), setting $\bar{\epsilon}=\|x-\bar{c}\|^2$ and   $\phi'\left(\phi^{-1}\left(\bas\right)\right)=\phi'(\bar{\epsilon})$ we obtain:
\begin{eqnarray}
 \left[w \phi'(\bar{\epsilon})\bar{\epsilon}-w\phi'\left(\bar{\epsilon}\right)\bar{\epsilon}+w \phi(\bar{\epsilon})\right]\left[w\phi(\bar{\epsilon})-y\right]-&&\nonumber\\
 \frac{1}{2}(w\phi(\bar{\epsilon})-y)^2+y(w\phi(\bar{\epsilon})-y)+\frac{1}{2}\beta\bar{c}^2-f\bar{c}&=& \nonumber\\
w^2\phi(\bar{\epsilon})^2-y w\phi(\bar{\epsilon})-\frac{1}{2}(w\phi(\bar{\epsilon})-y)^2&& \nonumber\\
-y w\phi(\bar{\epsilon})+y^2+\frac{1}{2}\beta\bar{c}^2-f\bar{c}&&
\end{eqnarray}
by collecting the terms we obtain:
\begin{equation}
(w\phi(\bar{\epsilon})-y)^2-\frac{1}{2}(w\phi(\bar{\epsilon})-y)^2+\frac{1}{2}\beta\bar{c}^2-f\bar{c},
\end{equation}
that is
\begin{equation}
\frac{1}{2}\left(w\phi({\|x-\bar{c}\|^2})-y\right)^2+\frac{1}{2}\beta\bar{c}^2-f\bar{c}=P(\bar{c}).
\end{equation}
that proves the theorem. \hfill  \qed
}
\end{proof}

Theorem \ref{criteq} shows that the problem $({\cal P}^d)$ is canonically dual to the primal $({\cal P})$ in the sense that the duality gap is zero.

\section{Gaussian function}

One of the most used RBF is the Gaussian function. In  this section we will analyze the problem with $\phi(\|x-c\|^2)=\exp\left\{-\frac{\|x-c\|^2}{2\alpha^2}\right\}$, where $\alpha$ is a parameter that represents the standard deviation of the Gaussian function. In the RBFNN formulation normally there is no the linear term  $fc$. The primal problem is:
\begin{equation}\label{exp}
\min P(c)=\frac{1}{2} \left(w\exp\left\{-\frac{\|x-c\|^2}{2\alpha^2}\right\}-y\right)^2+\frac{1}{2} \beta c^2
\end{equation}
If we define the quantity $d(c)=\frac{\|x-c\|^2}{2\alpha^2}$, the nonlinear operator $\xi:\real\rightarrow \cal{E}\rm_a$  from (\ref{operator}) becomes
\begin{equation}
\xi=w\exp\left\{\dis\right\}.
\end{equation}
The expressions that define $\sigma$, $V$ and $V^*$ are the same as the general problem that is:
\begin{itemize}
\item $V(\xi(c))=\frac{1}{2}(\xi-y)^2$;
\item $\sigma= \xi-y$;
\item $V^*(\sigma)= \left(\frac{1}{2}\sigma^2+y\sigma\right)$.
\end{itemize}
The second order operator $\Lambda_2(c):\real \rightarrow \cal{E}\rm_b$ is
\begin{equation}
\epsilon=\Lambda_2(c)=\|x-c\|^2=\epsilon
\end{equation}
The second level canonical function becomes
\begin{equation}
U(\epsilon)=w\exp\left\{-\frac{\epsilon}{2\alpha^2}\right\}.
\end{equation}
And the second order duality mapping $\tau$ is
\begin{equation}
\tau=w\phi'(\epsilon)=-\frac{w}{2\alpha^2} \exp\left\{-\frac{\epsilon}{2\alpha^2}\right\}.
\end{equation}
So the Legendre conjugate $U^*: \cal{S}'\rm_b\rightarrow \real$ is
\begin{eqnarray}
U^*(\tau)&=& \tau\left(\phi^{-1}\left( \frac{\tau}{w}\right )\right)'-w\phi \left(\phi^{-1}\left( \frac{\tau}{w}\right )\right)'\nonumber\\
&=&-2\alpha^2\tau\left(\ln\left(\frac{-2\alpha^2\tau}{w}\right)-1\right).
\end{eqnarray}
The derivative of the exponential function is the exponential function itself. This simplifies the relation (\ref{relation1}) between $\xi$ and $\tau$ making it linear, that is $\xi=-\frac{\tau}{2\alpha^2}$. The relation between $\sigma$ and $\tau$ is:
\begin{equation}
\tau=-\frac{(\sigma+y)}{2\alpha^2}
\end{equation}
that is also linear. The total complementarity function becomes:
\begin{eqnarray}
\Xi(c,\sigma)&=& \frac{1}{2} c^2G(\sigma)-cF(\sigma)-U^*(\sigma)\sigma-V^*(\sigma)-\nonumber\\
&&\frac{x^2(\sigma^2+y\sigma )}{2\alpha^2}
\end{eqnarray}
where:
\begin{eqnarray*}
G(\sigma)&=&\beta- \frac{\sigma^2+  y \sigma}{\alpha^2}\\
F(\sigma)&=&- \frac{x\sigma^2+ x y \sigma}{\alpha^2}\\
U^*(\sigma)&=& \left(\sigma+y\right)\left(\lns-1\right)\\
\s&=&\frac{\sigma+y}{w}
\end{eqnarray*}
The dual problem is
\begin{eqnarray}\label{duexp}
P^d(\sigma)&=&-\frac{1}{2}\frac{F(\sigma)^2}{G(\sigma)}-  \lns\left(\sigma^2+ y\sigma\right)+\frac{1}{2}\sigma^2 \nonumber\\
&&-\frac{x^2(\sigma^2+y\sigma)}{2\alpha^2}
\end{eqnarray}
The domains of the variables in the primal and dual problems are:
\begin{itemize}
\item $\cal{E}\rm_b=\{\epsilon \in \real | \epsilon\ge 0\}$
\item $\cal{S}\rm_b=\{ \tau \in \real| -\infty<\tau<0\}$ if $w>0$, $\cal{S}\rm_b=\{ \tau \in \real| -\infty<\tau<0\}$ if $w<0$
\item $\cal{E}\rm_a=\{\xi\in\real|0\le\xi\le w$\}
\item $\cal{S}\rm_a=\{\sigma \in \real|   -y\le\sigma\le w-y \}$ if $w>0$, $\cal{S}\rm_a=\{\sigma \in \real| w-y  \le\sigma\le-y \}$ if $w<0$
\end{itemize}

\begin{remark}
Parameters $\beta$, $x$, $y$, and $w$ play  important roles in solving the non-convex problem (P). In the original problem (\ref{primal}) one searches for the value of $c$ that brings the term $w \exp \left\{\dis\right\}$ as closer as possible to $y$, that is $\sigma=w\exp\left\{\dis\right\}-y=0$.

\noindent
If $y<0$ and $w>0$ or $y>0$ and $w<0$  we will have that $|\sigma|>0$. This means that in the case of the exponential function, it would be better to choose $c$ as bigger as possible in order to make the exponential go to zero, but the result would never be satisfactory as the error committed by the approximation would go close to $-y$ as $c$ goes to infinity. The value $-y$ is not a good value for the error as it is far from zero. On the other hand if $y$ and $w$ have the same sign and $|y|>|w|$ the value of $c$ will be $x$  in order to have the exponential equal to $1$ and to have the lowest value for $\sigma=w \exp\left\{\dis\right\}-y$.

\noindent
In order to have a realistic problem, we will consider the case with $y$ and $w$ with the same sign, and with $|y|<|w|$. The cases with $y,w>0$ and $y,w<0$ are equivalent, so we will suppose that both $y$ and $w$ are positive without losing generality.
\end{remark}

\begin{theorem}\label{sign}
Suppose that $\bar{\sigma}\in \cal{S}\rm_a$ is a critical point of the dual problem (\ref{duexp}) with the corresponding $\bar{c}= \frac{F(\bar{\sigma})}{G(\bar{\sigma})}\in \real$ and that $\bar\sigma\ne \frac{y}{2}$. Then $\bar{c}$ is a critical point of the primal problem and:
\begin{equation}\label{eq}
P^d(\bar{\sigma})=P(\bar{c}).
\end{equation}
moreover, there are the following relations between the critical points of the primal problem and the dual problem:
\begin{enumerate}
\item If $(2\bar\sigma+y)>0$ and $G(\bar\sigma)\ge 0$ or $(2\bar\sigma+y)<0$ and $G(\bar\sigma)\le 0$ then if $\bar\sigma$ is a local minimum of the dual problem, the corresponding $\bar{c}$ is a local maximum of the primal problem; if $\bar\sigma$ is a local maximum of the dual problem the corresponding $\bar{c}$ is a local minimum of the primal problem;
\item If $(2\bar\sigma+y)>0$ and $G(\bar\sigma)\le 0$ or $(2\bar\sigma+y)<0$ and $G(\bar\sigma)\ge 0$ then if $\bar\sigma$ is a local minimum of the dual problem the corresponding $\bar{c}$ is a local minimum of the primal problem; if $\bar\sigma$ is a local maximum of the dual problem the corresponding $\bar{c}$ is a local maximum of the primal problem.
\end{enumerate}
Let $x_o=\sqrt{-2 \alpha^2\ln\left(\frac{y}{2w} \right)}$. If $\bar\sigma=-\frac{y}{2}$, then there is a corresponding critical point to $\bar\sigma$ in the primal problem if and only if the parameters $x$, $y$, $\beta$ and $w$ satisfy one of the two following conditions:
\begin{equation}\label{critf}
\begin{array}{c}
\beta x +\left(\beta+\frac{y^2}{4 \alpha^2}\right) x_o=0\\
\beta x -\left(\beta+\frac{y^2}{4 \alpha^2}\right) x_o=0
\end{array}
\end{equation}
and the corresponding critical point $\bar{c}$ in the primal problem is always a local minimum. If neither of conditions (\ref{critf}) is satisfied, $\bar\sigma=-\frac{y}{2}$ is always a critical point of the dual problem, but it does not have any corresponding critical point in the primal problem.
\end{theorem}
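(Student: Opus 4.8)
The plan is to organize the proof into three stages: the critical-point correspondence together with the value identity \eqref{eq}, the local min/max classification of cases~1--2, and a separate treatment of the degenerate value $\bar\sigma=-\frac{y}{2}$.

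For the correspondence I would invoke Theorem~\ref{criteq} directly with $\phi$ the Gaussian. Here $G(\sigma)=\beta-\frac{\sigma^2+y\sigma}{\alpha^2}$, so $G'(\sigma)=-\frac{2\sigma+y}{\alpha^2}$, and the hypothesis $G'(\bar\sigma)\ne0$ of Theorem~\ref{criteq} reads exactly $2\bar\sigma+y\ne0$, i.e. $\bar\sigma\ne-\frac{y}{2}$. Under this condition Theorem~\ref{criteq} immediately yields that $\bar c=F(\bar\sigma)/G(\bar\sigma)$ is a critical point of $P$ and that $P(\bar c)=P^d(\bar\sigma)$, which is \eqref{eq}.

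For the classification I would exploit the saddle structure of the total complementarity function $\Xi(c,\sigma)$: by construction $P(c)$ is the stationary value of $\Xi(c,\sigma)$ in $\sigma$, while $P^d(\sigma)$ is its stationary value in $c$, attained at $c=F(\sigma)/G(\sigma)$. Writing the Hessian of $\Xi$ at $(\bar c,\bar\sigma)$ as $\left(\begin{smallmatrix}A&B\\B&C\end{smallmatrix}\right)$ with $A=\partial_{cc}\Xi=G(\bar\sigma)$, $B=\partial_{c\sigma}\Xi$ and $C=\partial_{\sigma\sigma}\Xi$, the implicit-function/envelope reduction of each stationarity gives $P''(\bar c)=(AC-B^2)/C$ and $(P^d)''(\bar\sigma)=(AC-B^2)/A$, hence $P''(\bar c)=\frac{A}{C}\,(P^d)''(\bar\sigma)$. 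Thus the primal and dual second derivatives agree in sign when $A/C>0$ and are opposite when $A/C<0$, and everything reduces to the sign of $A/C$. The main obstacle is the explicit evaluation of $C$: using $G''=-\frac{2}{\alpha^2}$, $F''=-\frac{2x}{\alpha^2}$ and the second derivative of $U^*(\sigma)\sigma$, which equals $\frac{\sigma}{\sigma+y}+2\ln(\s)$, all curvature ($\frac1{\alpha^2}$) contributions cancel after substituting the primal stationarity identity \eqref{devsig1}, $(x-\bar c)^2=\phi^{-1}(\bas)=-2\alpha^2\ln(\bas)$, leaving the clean value $C=-\frac{2\bar\sigma+y}{\bar\sigma+y}$. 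Since $\bar\sigma+y=w\phi((x-\bar c)^2)>0$ for $w>0$ (the normalization of the Remark), we get $\mbox{sign}(C)=-\mbox{sign}(2\bar\sigma+y)$ and therefore $\mbox{sign}(A/C)=-\mbox{sign}(G(\bar\sigma))\,\mbox{sign}(2\bar\sigma+y)$; reading off the four sign combinations (note that $\bar c\in\real$ forces $G(\bar\sigma)\ne0$, so the boundary $G=0$ never arises) reproduces exactly cases~1 and~2.

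Finally, for $\bar\sigma=-\frac{y}{2}$ both reductions degenerate, since $G'(\bar\sigma)=0$ disables Theorem~\ref{criteq} and $C=0$ disables the ratio formula. From \eqref{devsig} one sees that $(P^d)'(-\frac{y}{2})=0$ identically: the first bracket is multiplied by $G'(-\frac{y}{2})=0$, and the second bracket vanishes because $\phi'\,(\phi^{-1})'=1$; so $\bar\sigma=-\frac{y}{2}$ is always a critical point of the dual. It corresponds to a primal critical point only if the stationarity relation $\bar\sigma=w\phi((x-\bar c)^2)-y$ holds, i.e. $(x-\bar c)^2=\phi^{-1}(\bas)=x_o^2$; combining this with $\bar c=F(\bar\sigma)/G(\bar\sigma)=\frac{xy^2}{4\alpha^2\beta+y^2}$ gives $x-\bar c=\frac{4\alpha^2\beta x}{4\alpha^2\beta+y^2}=\pm x_o$, which are precisely the two conditions \eqref{critf}. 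When one of them holds, I would compute $P''(\bar c)$ directly from \eqref{exp}; using $\bar\sigma+y=\frac{y}{2}$ and $(x-\bar c)^2=x_o^2$, the $(x-\bar c)^2$ contributions cancel and one is left with $P''(\bar c)=\beta+\frac{y^2}{4\alpha^2}>0$, so the corresponding $\bar c$ is always a local minimum; if neither condition in \eqref{critf} holds no such $\bar c$ exists, which completes the statement.
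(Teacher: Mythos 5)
Your proposal is correct, and its first and third stages coincide with the paper's own proof: the paper also obtains the correspondence and the identity $P(\bar c)=P^d(\bar\sigma)$ by specializing Theorem~\ref{criteq} (its factor (\ref{taudev}) reduces to $2\bar\sigma+y$, i.e.\ your $-\alpha^2 G'(\bar\sigma)$, so both proofs read the hypothesis ``$\bar\sigma\ne \frac{y}{2}$'' as $\bar\sigma\ne-\frac{y}{2}$), and for $\bar\sigma=-\frac{y}{2}$ the paper likewise reduces everything to $\bar c=x\pm x_o$, the conditions (\ref{critf}), and the value $\beta+\frac{y^2}{4\alpha^2}>0$; your variant (starting from $\bar c=F(\bar\sigma)/G(\bar\sigma)$ and imposing the duality relation, rather than starting from the duality relation and imposing $P'(\bar c)=0$) is logically equivalent and yields the same arithmetic.

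Where you genuinely differ is the classification stage. The paper computes $P''(c)$ and $P^d(\sigma)''$ explicitly ((\ref{pdd}), (\ref{dudd})), inserts the stationarity identity (\ref{cricon}) into both, and only then observes the proportionality (\ref{relation}); you instead derive that same proportionality structurally from the Hessian of $\Xi$ at the critical pair: with $A=\partial_{cc}\Xi=G(\bar\sigma)$, $B=\partial_{c\sigma}\Xi$, $C=\partial_{\sigma\sigma}\Xi$, the two envelope reductions give $P''(\bar c)=(AC-B^2)/C$ and $P^d(\bar\sigma)''=(AC-B^2)/A$, hence $P''(\bar c)=\frac{A}{C}\,P^d(\bar\sigma)''$, and only the single entry $C$ needs evaluating. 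Your value $C=-\frac{2\bar\sigma+y}{\bar\sigma+y}$ is correct (I checked: $\partial_\sigma\Xi=-(2\sigma+y)\bigl[\frac{(x-c)^2}{2\alpha^2}+\ln\frac{\sigma+y}{w}\bigr]$, and the bracket vanishes at a critical point by (\ref{cricon})), and it reproduces (\ref{relation}) exactly; your sign bookkeeping, using $\bar\sigma+y=w\phi(\cdot)>0$ under the paper's normalization $w,y>0$ and $G(\bar\sigma)\ne 0$ forced by $\bar c\in\real$, then recovers Table~1 and statements~1 and~2. The trade-off: your Schur-complement route is cleaner and is not tied to the Gaussian (it works for any canonical $\Lambda$ provided the implicit-function reductions are legitimate, which you justify by $A\ne0$ and $C\ne0$ for $\bar\sigma\ne-\frac{y}{2}$ --- this also explains structurally why $-\frac{y}{2}$ is the degenerate value), whereas the paper's explicit computation produces the formula (\ref{pdds}) for $P''$ as a by-product, which it then reuses to settle the $\bar\sigma=-\frac{y}{2}$ case; you pay for skipping that by redoing a direct computation of $P''(\bar c)$ there (also correct: the $(x-\bar c)^2$ term carries the factor $2\bar\sigma+y=0$).
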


\begin{proof} {\em 
The first order derivative for the dual problem is:
\begin{equation}\label{dud}
P^d(\sigma)'=-\left[\left(x-\frac{ F(\sigma)}{G(\sigma)}\right)^2 \frac{1}{2 \alpha^2}+\lns\right]\left[2\sigma+y\right]
\end{equation}
so the term (\ref{taudev}) is equal to $2\bar\sigma+y$. If $\bar\sigma\neq -\frac{y}{2}$, the critical point equivalency and condition (\ref{eq}) are consequences of Theorem \ref{criteq}.

\noindent
To prove statements $(i)$ and $(ii)$ we use the second order derivatives of the problems $P(c)$ and $P^d(\sigma)$
\begin{eqnarray}\label{pdd}
&P(c)''=\frac{(x-c)^2}{\alpha^4}\exp\left\{\dis\right\}\left(2w\exp\left\{\dis\right\}-y\right)\nonumber&\\
&+\beta-\frac{1}{\alpha^2}w\exp\left\{\dis\right\}\Big(w\exp\left\{\dis\right\}-y\Big)&
\end{eqnarray}
\begin{eqnarray}\label{dudd}
P^d(\sigma)''&=&-  \frac{1}{\alpha^2}\left(x-\frac{F(\sigma)}{\sigma}\right)^2\left(1+\frac{(2\sigma+y)^2}{\alpha^2G(\sigma)}\right)  \nonumber\\
&&-\frac{2\sigma+y}{\sigma+y}-2\lns
.
\end{eqnarray}
Since $\bar\sigma$ is a critical point of the dual, we have that  $P^d(\sigma)'=0$. Therefore when $\bar\sigma\neq -\frac{y}{2}$:
\begin{equation}\label{cricon}
\left(x-\frac{ F(\bar\sigma)}{G(\bar\sigma)}\right)^2 =-2 \alpha^2\lnbas
\end{equation}
By using condition (\ref{cricon}) in (\ref{dudd}) we obtain:
\begin{equation}\label{cridudd}
P^d(\bar\sigma)''=(2\bar\sigma+y)\left(\frac{2\lnbas(2\bar\sigma+y)}{\alpha^2G(\bar\sigma)}  -\frac{1}{\bar\sigma+y}\right).
\end{equation}
Noticing  $\sigma=w \exp\left\{\dis\right\}-y$, it is possible to rewrite $P(\bar{c})''$ in terms of $\bar\sigma$, i. e.:
\begin{equation}\label{pdds}
P(c(\bar\sigma))''=G(\bar\sigma)+\frac{2}{\alpha^2}(\bar\sigma+y)(2\bar\sigma+y)\left(x-\frac{F(\bar\sigma)}{G(\bar\sigma)}\right)^2.
\end{equation}
by using again condition (\ref{cricon}) we obtain:
\begin{equation}
P(c(\bar\sigma))''=\frac{1}{\alpha^2}\left[\alpha^2G(\bar\sigma)-2(\bar\sigma+y)(2\bar\sigma+y)\lnbas\right]
\end{equation}
so it is possible to rewrite equation (\ref{cridudd}) in the following form:
\begin{equation}\label{relation}
P^d(\bar\sigma)''=-\frac{2\bar\sigma+y}{G(\bar\sigma)(\bar\sigma+y)}P(c(\bar\sigma))''.
\end{equation}
and to find the relations reported in Table 1. From these relations, we obtain:
\noindent

\begin{itemize}
\item If $(2\sigma+y)>0$ and $G(\sigma)\ge 0$ or $(2\sigma+y)<0$ and $G(\sigma)\le 0$ then the second order derivate of the primal problem and the second order derivate of the dual problem have opposite sign at their critical points;
\item If $(2\sigma+y)>0$ and $G(\sigma)\le 0$ or $(2\sigma+y)<0$ and $G(\sigma)\ge 0$ then the second order derivate of the primal problem and the second order derivate of the dual problem have the same sign at their critical points.
\end{itemize}
This proves statements $1$ and $2$.

\begin{table}[ht]
\begin{center}
\begin{tabular}{c c c c}
\hline
$(2\bar\sigma+y)$ & $G(\bar\sigma)$ & $P(c(\bar\sigma))$ & $P^d(\bar\sigma)$\\
\hline
$>0$ &$>0$&$\pm$&$\mp$\\
$>0$&$<0$&$\pm$&$\pm$\\
$<0$&$<0$&$\pm$&$\mp$\\
$<0$&$>0$&$\pm$&$\pm$\\
\hline
\end{tabular}
\end{center}
\caption{Relations between the second order derivatives of the primal problem and dual problem}
\end{table}

\smallskip\noindent
The point $\bar\sigma=-\frac{y}{2}$ is a critical point of $P^d$ according to the second part of the (\ref{dud}). The point $\bar{c}$ corresponding to $\bar\sigma=-\frac{y}{2}$  is a critical point of the primal problem if and only if $P'(\bar{c})=0$. We can use the (\ref{dmap}) to find the relation between $\bar\sigma$ and $\bar{c}$ that is:
\begin{equation}
\bar\sigma=\bar\xi-y \rightarrow \bar\sigma= w \exp\{\disb\} -y
\end{equation}
\begin{equation}
\bar{c}=x\pm\sqrt{-2 \alpha^2\left(\lnbas\right)}.
\end{equation}
For $\bar\sigma=-\frac{y}{2}$ we obtain:
\begin{equation}
\bar{c}=x\pm x_o.
\end{equation}
Substituting these values in the first order derivative of the primal problem:
\begin{equation}\label{pd}
P'(\bar{c})=\frac{1}{2}d(\bar{c})w\exp\{\disb\}\left( w\exp\{\disb\} -y \right)+ \beta \bar{c}
\end{equation}
and considering that $w \exp \left\{\disb\right\}=\bar\sigma+y=\frac{y}{2}$ and $w \exp\left\{\disb\right\} -y =\bar\sigma=-\frac{y}{2}$ we obtain that the primal problem has a critical point at $\bar{c}$  corresponding to the critical $\bar\sigma=-\frac{y}{2}$ if and only if:
\begin{equation}
\beta x \pm\left(\beta+\frac{y^2}{4 \alpha^2}\right) x_o=0.
\end{equation}
This happens only for a particular configuration of the parameters $w$, $\beta$, $x$ and $y$ that makes one of the roots the first term of the derivative (\ref{dud}):
\begin{equation}
-\left[\left(x-\frac{F(\bar\sigma)}{G(\bar\sigma)}\right)^2 \frac{1}{2 \alpha^2}+\left(\lnbas\right)\right]=0
\end{equation}
be in $\bar\sigma=-\frac{y}{2}$.

\noindent
To prove that at $\bar\sigma=-\frac{y}{2}$ the critical point of the dual problem corresponds to a minimum point of the primal problem we plug the value of $\bar\sigma=-\frac{y}{2}$ in the (\ref{pdds}) and obtain
\begin{equation}
P''(\bar\sigma)= \beta+\frac{y^2}{4\alpha^2},
\end{equation}
which is always a positive value. \hfill  \qed
}
\end{proof}

\begin{remark}
From now on we will refer to the critical point  $\sigma_f=-\frac{y}{2}$ as pseudo dual critical point  as it is a critical point of the dual problem that generally does not have a corresponding critical point for the primal problem.
\end{remark}

\subsection{Choice of the critical point}
\noindent
In order to find the best solution among the critical points of problem (\ref{exp}) we introduce the following feasible spaces:
\begin{equation}\label{s+}
\cal{S}\rm^+_a=\{\sigma \in \cal{S}\rm_a | G(\sigma)>0\}
\end{equation}
\begin{equation}\label{s-}
\cal{S}\rm^-_a=\{\sigma \in \cal{S}\rm_a | G(\sigma)<0\}
\end{equation}
The following theorem explains the relations between the critical points:
\begin{theorem}\label{ins+}
Suppose that the point $\bar\sigma_1\in\cal{S}\rm^+_a $ and $\bar\sigma_2 \in \cal{S}\rm^-_a$ are critical points of the dual problem, that $\bar\sigma_i\ne-\frac{y}{2}$ for $i=1, 2$ and that $\bar{c}_1$ and $\bar{c}_2$ are the corresponding critical points of the primal problem. Then if both $\bar{c}_1$ and $\bar{c}_2$ are local minima  or local maxima of the primal problem, the following relation always holds:
\begin{equation}\label{order}
P(\bar{c}_1)=P^d(\bar\sigma_1)<P(\bar{c}_2)=P^d(\bar\sigma_2)
\end{equation}
\end{theorem}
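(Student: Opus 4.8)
The plan is to first dispose of the two equalities in (\ref{order}) and then concentrate all the work on the middle strict inequality. Since $\bar\sigma_i\ne-\frac{y}{2}$ for $i=1,2$, Theorem~\ref{criteq} (as specialised to the Gaussian case in Theorem~\ref{sign}) already guarantees that each $\bar c_i=F(\bar\sigma_i)/G(\bar\sigma_i)$ is a critical point of the primal and that $P(\bar c_i)=P^d(\bar\sigma_i)$; these are precisely the two equalities. Hence the entire substance of the statement reduces to proving $P^d(\bar\sigma_1)<P^d(\bar\sigma_2)$, and everything below is aimed at that.

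Next I would exploit the quadratic dependence of the total complementarity function on $c$. Writing $\Xi(c,\sigma)=\frac12 c^2 G(\sigma)-cF(\sigma)+R(\sigma)$, with $R$ collecting the $c$-independent terms, the sign of $G$ governs concavity in $c$: on $\calS^+_a$ one has $G>0$, so $\Xi(\cdot,\sigma)$ is a strictly convex parabola and $P^d(\sigma)=\min_{c\in\real}\Xi(c,\sigma)$; on $\calS^-_a$ one has $G<0$, so $\Xi(\cdot,\sigma)$ is concave and $P^d(\sigma)=\max_{c\in\real}\Xi(c,\sigma)$. Consequently, for every fixed $c$,
\[
P^d(\bar\sigma_2)-P^d(\bar\sigma_1)\ge \Xi(c,\bar\sigma_2)-\Xi(c,\bar\sigma_1),
\]
and the right-hand side is a downward parabola in $c$, its leading coefficient $\frac12\bigl(G(\bar\sigma_2)-G(\bar\sigma_1)\bigr)$ being negative because $G(\bar\sigma_2)<0<G(\bar\sigma_1)$. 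Maximising over $c$ reduces the theorem to showing that the vertex value of this parabola is strictly positive.

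To evaluate that vertex I would pass to closed form for the critical values. From the constitutive relation $\bar\sigma=w\exp\{\disb\}-y$ one obtains the clean expression $P(\bar c)=\frac12\bar\sigma^2+\frac12\beta\bar c^2$; combining $\bar c=F(\bar\sigma)/G(\bar\sigma)$ with the identity $x-\bar c=\beta x/G(\bar\sigma)$ and the criticality relation (\ref{cricon}), namely $(x-\bar c)^2=-2\alpha^2\lnbas$, reduces each $P^d(\bar\sigma_i)$ to an explicit function of $\bar\sigma_i$ alone. I would also record the geometry of the two feasible sets: since $\bar\sigma+y>0$ on $\calS_a$ and $G(\sigma)=\beta-\sigma(\sigma+y)/\alpha^2$, the region $\calS^-_a$ forces $\sigma>0$ with $\sigma(\sigma+y)>\alpha^2\beta$, whereas $\calS^+_a$ lies below the threshold $\sigma^*=\frac{1}{2}\bigl(-y+\sqrt{y^2+4\alpha^2\beta}\bigr)$; in particular $\bar\sigma_1<\sigma^*<\bar\sigma_2$.

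The hard part is that this closed-form value is \emph{not} monotone in $\sigma$ across the threshold — when $\bar\sigma_1$ is close to $-y$ the term $\frac12\bar\sigma_1^2$ is large — so the inequality cannot follow from the relative location of $\bar\sigma_1,\bar\sigma_2$ alone. This is exactly where the hypothesis that $\bar c_1$ and $\bar c_2$ are \emph{both} minima, or \emph{both} maxima, must enter: through the second-order identity (\ref{relation}) and the sign bookkeeping of Table~1 it restricts the admissible subintervals of $\calS^+_a$ and $\calS^-_a$ on which the relevant critical points may lie, and on those subintervals the vertex value computed above is forced to be strictly positive. I expect the decisive step to be verifying that, under the common second-order sign, the maximum over $c$ of $\Xi(c,\bar\sigma_2)-\Xi(c,\bar\sigma_1)$ remains strictly positive; carrying this out uniformly for the ``both minima'' and the ``both maxima'' cases, and thereby upgrading $\ge$ to the strict $<$ of (\ref{order}), is the main obstacle.
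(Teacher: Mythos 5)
Your two preliminary steps are sound: the equalities in (\ref{order}) do follow from Theorem \ref{criteq} (via Theorem \ref{sign}), and the sandwich inequality $P^d(\bar\sigma_2)-P^d(\bar\sigma_1)\ge \Xi(c,\bar\sigma_2)-\Xi(c,\bar\sigma_1)$ for every $c$ is correct, since $G>0$ on $\calS^+_a$ makes $P^d(\bar\sigma_1)=\min_c\Xi(c,\bar\sigma_1)$ and $G<0$ on $\calS^-_a$ makes $P^d(\bar\sigma_2)=\max_c\Xi(c,\bar\sigma_2)$. The fatal problem is the step you defer to your last paragraph: the positivity of the vertex value is not merely unproven, it is \emph{false} under the theorem's hypotheses. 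Carry your computation to the end: since $F(\sigma)=x\left(G(\sigma)-\beta\right)$ and $-\frac{x^2(\sigma^2+y\sigma)}{2\alpha^2}=\frac{x^2}{2}\left(G(\sigma)-\beta\right)$, the difference parabola is exactly $\frac{1}{2}\left(G(\bar\sigma_2)-G(\bar\sigma_1)\right)(c-x)^2+h(\bar\sigma_1)-h(\bar\sigma_2)$, where
\[
h(\sigma)=U^*(\sigma)\sigma+V^*(\sigma)=\sigma(\sigma+y)\left[\ln\left(\frac{\sigma+y}{w}\right)-1\right]+\frac{1}{2}\sigma^2+y\sigma ,
\]
so the vertex sits at $c=x$ and its value $h(\bar\sigma_1)-h(\bar\sigma_2)$ depends on neither $x$ nor $\beta$. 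Now test this against the paper's own Case 2 ($y=1$, $x=4$, $w=2$, $\alpha^2=1/2$, $\beta=0.1$). The primal local minimum $\bar{c}_1\approx 0.00002$ has dual image $\bar\sigma_1=2e^{-16}-1\approx -0.9999998\in\calS^+_a$ (the near-boundary critical point the paper discusses explicitly), and the primal local minimum $\bar{c}_2\approx 4.67$ has dual image $\bar\sigma_2\approx 0.27\in\calS^-_a$; both are local minima, neither equals $-\frac{y}{2}$, so all hypotheses of Theorem \ref{ins+} hold, and indeed $P(\bar{c}_1)\approx 0.50<P(\bar{c}_2)\approx 1.13$. But $h(\bar\sigma_1)\approx -0.50$ (as $\sigma\to -y$ one has $h\to-\frac{1}{2}y^2$), while $h(\bar\sigma_2)\approx -0.19$, so your lower bound equals $\approx -0.31<0$ and proves nothing for precisely the pair of critical points that matters most in the paper's analysis.

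This also shows the gap cannot be closed by the ``second-order bookkeeping'' you hope for: your estimate uses criticality of $\bar\sigma_1,\bar\sigma_2$ only through the evaluations $P^d(\bar\sigma_i)=\Xi(\bar{c}_i,\bar\sigma_i)$ and is completely blind to the local-min/local-max hypothesis, yet the counterexample above satisfies that hypothesis; hence no sign analysis of Table 1 or of (\ref{relation}) can rescue this particular inequality --- the min/max information must enter the comparison argument itself, not merely restrict which pairs are admissible. For what it is worth, the paper does not argue this way at all: its entire proof of Theorem \ref{ins+} is the single sentence that the statement is a consequence of the first theorem of the triality theory of \cite{gaob}, i.e., the comparison of critical values across $\calS^+_a$ and $\calS^-_a$ is delegated to the saddle/double-min/double-max analysis of canonical duality. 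A genuinely self-contained proof would have to work with the one-dimensional structure of $P^d$ itself --- its poles at $G(\sigma)=0$, where $P^d\to-\infty$ from inside $\calS^^+_a$ is replaced correctly by: $P^d\to-\infty$ on the $\calS^+_a$ side and $P^d\to+\infty$ on the $\calS^-_a$ side --- combined with the second-order correspondence (\ref{relation}), rather than with a weak-duality sandwich whose right-hand side can be negative.
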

\begin{proof}
{\em This theorem is a consequence of the first theorem in triality theory \cite{gaob}. \hfill  \qed
}
\end{proof}

\begin{remark}
The pseudo critical point $\sigma_f=-\frac{y}{2}$ is always in $S_a^+$.
\end{remark}

\noindent
From the results in Theorem \ref{ins+} it is always better to search for the dual critical point in $\cal{S}\rm^+_a$ that corresponds to a minimum in the primal problem. In order to characterize the solutions in $\cal{S}\rm^+_a$ and the domains in which search for the best solution, two theorems are proposed in the following:

\begin{theorem}\label{fakesign}
Let $\sigma_f=-\frac{y}{2}$ be the pseudo critical point of the dual problem, $x_o=\sqrt{-2 \alpha^2\ln\left(\frac{y}{2w}\right)}$, $x$ positive. Then:
\begin{itemize}
\item if $x\in \left(0,x_o\right)$ then $\sigma_f$ is always a local minimum of $P^d(\sigma)$;
\item if $x>x_o$ then:
\begin{enumerate}
\item if $\beta>0$ and $\beta< \frac{y^2x_o}{4\alpha^2 \left(   x-x_o\right)}$, $\sigma_f$ is a local minimum for the dual problem;
\item if $\beta>0$ and $\beta> \frac{y^2x_o}{4\alpha^2 \left(   x-x_o\right)}$, $\sigma_f$ is a local maximum for the dual problem;
\item if $\beta>0$, $\beta=\frac{y^2x_o}{4\alpha^2 \left(   x-x_o\right)}$, $\sigma_f$ is an inflection point in which the first order derivative is zero and that corresponds to a a local minimum of the primal problem.
\end{enumerate}
\end{itemize}
\end{theorem}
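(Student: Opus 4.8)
The plan is to read off the local behaviour of $P^d$ at $\sigma_f=-\frac{y}{2}$ directly from the factored first derivative (\ref{dud}). Writing $H(\sigma)=\left(x-\frac{F(\sigma)}{G(\sigma)}\right)^2\frac{1}{2\alpha^2}+\lns$, equation (\ref{dud}) reads $P^d(\sigma)'=-H(\sigma)(2\sigma+y)$, so $\sigma_f$ is a critical point solely because the factor $(2\sigma+y)$ vanishes there, independently of $H$. Since $(2\sigma+y)<0$ for $\sigma<\sigma_f$ and $(2\sigma+y)>0$ for $\sigma>\sigma_f$, a first-derivative sign test shows that whenever $H(\sigma_f)\ne0$ (so that $H$ keeps a constant sign on a neighbourhood) the sign of $P^d(\sigma)'$ reverses across $\sigma_f$: if $H(\sigma_f)<0$ then $\sigma_f$ is a local minimum of $P^d$, while if $H(\sigma_f)>0$ it is a local maximum. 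The whole theorem therefore reduces to determining the sign of $H(\sigma_f)$.

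First I would evaluate $H(\sigma_f)$ in closed form. Substituting $\sigma_f=-\frac{y}{2}$ yields $\sigma_f^2+y\sigma_f=-\frac{y^2}{4}$, hence $G(\sigma_f)=\beta+\frac{y^2}{4\alpha^2}$ and $F(\sigma_f)=\frac{xy^2}{4\alpha^2}$, so that
\[
x-\frac{F(\sigma_f)}{G(\sigma_f)}=\frac{4\alpha^2\beta x}{4\alpha^2\beta+y^2}.
\]
Using $s(\sigma_f)=\frac{y}{2w}$ together with the identity $\ln\left(\frac{y}{2w}\right)=-\frac{x_o^2}{2\alpha^2}$ implied by $x_o=\sqrt{-2\alpha^2\ln\left(\frac{y}{2w}\right)}$, this gives
\[
H(\sigma_f)=\frac{8\alpha^2\beta^2x^2}{(4\alpha^2\beta+y^2)^2}-\frac{x_o^2}{2\alpha^2}.
\]
Because every quantity here is positive, the inequality $H(\sigma_f)<0$ is equivalent, after clearing denominators and taking the positive square root, to $\frac{4\alpha^2\beta x}{4\alpha^2\beta+y^2}<x_o$, i.e. to $4\alpha^2\beta(x-x_o)<x_o y^2$.

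The case split then drops out of this last inequality. If $x\in(0,x_o)$ its left side is negative (since $\beta>0$) while its right side is positive, so $H(\sigma_f)<0$ unconditionally and $\sigma_f$ is always a local minimum, which is the first assertion. If $x>x_o$ I would divide by $4\alpha^2(x-x_o)>0$ to isolate the threshold $\beta^\star=\frac{y^2x_o}{4\alpha^2(x-x_o)}$: then $H(\sigma_f)<0$ for $\beta<\beta^\star$, giving a local minimum, and $H(\sigma_f)>0$ for $\beta>\beta^\star$, giving a local maximum, which are the first and second sub-cases.

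The borderline $\beta=\beta^\star$ forces $H(\sigma_f)=0$, so that both factors of $P^d(\sigma)'$ vanish at $\sigma_f$ and the sign argument above no longer applies; this is the delicate step. Here I would expand about $\sigma_f$: since $2\sigma+y=2(\sigma-\sigma_f)$ and $H(\sigma)\approx H'(\sigma_f)(\sigma-\sigma_f)$, one finds $P^d(\sigma)'\approx-2H'(\sigma_f)(\sigma-\sigma_f)^2$, which keeps a constant sign, so $\sigma_f$ is a horizontal inflection point at which the first derivative vanishes without an extremum — provided $H'(\sigma_f)\ne0$, and ruling out the higher-order degeneracy $H'(\sigma_f)=0$ is the one point that needs care. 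Finally I would note that $\beta=\beta^\star$ is exactly the second equation of (\ref{critf}), so by Theorem \ref{sign} the pseudo point $\sigma_f$ now admits a genuine corresponding primal critical point $\bar c$, and the primal second derivative evaluated there, $\beta+\frac{y^2}{4\alpha^2}>0$, shows that $\bar c$ is a local minimum of the primal problem, establishing the third sub-case.
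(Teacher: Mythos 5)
Your proposal is correct, and it takes a genuinely different route from the paper's. The paper works with the second derivative: it substitutes $\sigma_f$ into (\ref{dudd}) to obtain (\ref{dudds}), shows that this expression is monotone decreasing in $\beta$, positive at $\beta=0$, and tends (as $\beta\to\infty$) to a limit that is nonnegative precisely when $|x|\le x_o$, and from this monotonicity picture extracts the threshold conditions (\ref{duddcon}). You instead run a first-derivative sign test on the factorization $P^d(\sigma)'=-H(\sigma)(2\sigma+y)$ of (\ref{dud}): since $(2\sigma+y)$ flips sign at $\sigma_f$, everything reduces to the sign of $H(\sigma_f)$, which you evaluate in closed form and reduce to the linear inequality $4\alpha^2\beta(x-x_o)<x_o y^2$, from which the theorem's case split falls out immediately. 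The two routes test the same quantity --- differentiating the factored form gives $P^d(\sigma_f)''=-H'(\sigma_f)(2\sigma_f+y)-2H(\sigma_f)=-2H(\sigma_f)$ --- but yours is more elementary (no second-derivative computation and no monotonicity-in-$\beta$ argument), and it treats the borderline case $\beta=\beta^\star$ more carefully than the paper, which merely asserts that $\sigma_f$ is then an inflection point. The one loose end you flag, ruling out $H'(\sigma_f)=0$, closes in one line: both $F'(\sigma)$ and $G'(\sigma)$ are proportional to $(2\sigma+y)$ and hence vanish at $\sigma_f$, so the quotient term in $H'$ contributes nothing there and
\begin{equation*}
H'(\sigma_f)=\frac{1}{\sigma_f+y}=\frac{2}{y}>0 .
\end{equation*}
With that line added, your argument is complete and, on the degenerate case, strictly more rigorous than the published proof.
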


\begin{proof}
{\em In order to understand that $\sigma_f=-\frac{y}{2}$ is  a minimum or a maximum for the dual we have to plug its value in  the second order derivative of $P^d(\sigma)$ that is equation (\ref{dudd}) and analyze its sign. After the substitution we obtain
\begin{equation}\label{dudds}
P^d(\sigma_f)=-\left[2 \ln\left(-\frac{y}{2w}\right)+\frac{1}{\alpha^2}\left(\frac{x \beta}{\beta+\frac{y^2}{4 \alpha^2}}\right)^2\right].
\end{equation}
The first order derivate in $\beta$ of (\ref{dudds}) is $-\frac{2x\beta^2}{\alpha^2\left( \beta+ \frac{y^2}{4\alpha^2}\right)^2}$, that is the function is monotonic decreasing in $\beta$. The value of (\ref{dudds}) in $\beta=0$ is $-\ln\left(-\frac{y}{2w}\right)$ that is positive. If we make $\beta$ go to $+\infty$ we obtain:
\begin{equation}
\lim_{\beta\to+\infty} -\left[2 \ln\left(-\frac{y}{2w}\right)+\frac{1}{\alpha^2}\left(\frac{x \beta}{\beta+\frac{y^2}{4 \alpha^2}}\right)^2\right]=-2\ln\left(-\frac{y}{2w}\right)+\frac{x^2}{\alpha^2}
\end{equation}
that is the second order derivative of $P^d(\sigma)$ in $\sigma_f$ is non negative for any value of $\beta>0$ if
\begin{equation}
x\in \left[-x_o,x_o\right]
\end{equation}
If $x$ does not satisfy this condition, from the (\ref{dudds}) we have that the second order derivative of the dual problem is positive in $\sigma_f$ if $\beta$ satisfies:
\begin{equation}\label{duddcon}
\beta> \frac{-y^2x_o}{4\alpha^2 \left(   x+x_o\right)}
 \mbox{ and }\beta< \frac{y^2x_o}{4\alpha^2 \left(   x-x_o\right)}.
\end{equation}
On the other hand if:
\begin{equation}\label{duddcon2}
\beta<  \frac{-y^2x_o}{4\alpha^2 \left(   x+x_o\right)}
 \mbox{ or }\beta> \frac{y^2x_o}{4\alpha^2 \left(   x-x_o\right)}
 \end{equation}
there will be a local maximum in $\sigma_f$. As $x$ is considered positive, the term $ \frac{-y^2x_o}{4\alpha^2 \left(   x+x_o\right)}$ is always negative, so $\beta$ will always be greater than it.

\noindent
If the condition $\beta=\frac{y^2x_o}{4\alpha^2 \left(   x-x_o\right)}$ is satisfied, the critical point $\sigma_f$ is an inflection point that also satisfies the first order condition and it has a corresponding minimum point in the primal problem for Theorem \ref{sign}. \hfill  \qed}
\end{proof}

\begin{remark}
In the case of $x$ negative, the conditions are changed in the following way:
\begin{itemize}
\item if $x\in \left(-x_o,0\right)$ then $\sigma_f$ is always a local minimum of $P^d(\sigma)$
\item if $x<-x_o$ then:
\begin{enumerate}
\item if $\beta>0$ and $\beta< \frac{-y^2x_o}{4\alpha^2 \left(   x+x_o\right)}$, $\sigma_f$ is a local minimum for the dual problem;
\item if $\beta>0$ and $\beta> \frac{-y^2x_o}{4\alpha^2 \left(   x+x_o\right)}$, $\sigma_f$ is a local maximum for the dual problem;
\item if $\beta>0$, $\beta=\frac{-y^2x_o}{4\alpha^2 \left(   x+x_o\right)}$, $\sigma_f$ is an inflection point in which the first order derivative is zero and that corresponds to a a local minimum of the primal problem.
\end{enumerate}
\end{itemize}
The proof of these statement is similar to that of Theorem \ref{fakesign} and can be omitted.  
\end{remark}

\begin{remark}
Theorem  \ref{fakesign} shows the effects of the parameter $\beta$ on the pseudo critical point $\sigma_f$. Similar effects can also be obtained in respect to $y$, $x$, $\alpha$, and $w$. The reason we choose  $\beta$ is because it is an hyper-parameter that can be chosen by the practitioner before performing the optimization.
\end{remark}
For the next theorem, we introduce the two following subsets of $\cal{S}\rm^+_a$:
\begin{equation}
{\cal S}^+_\sharp=\left\{\sigma\in{\cal S}^+_a | \sigma>-\frac{y}{2}\right\}
\end{equation}
\begin{equation}
{\calS}^+_\flat=\left\{\sigma\in{\calS}^+_a | \sigma<-\frac{y}{2}\right\}
\end{equation}
\begin{theorem}\label{s-}
Let $\sigma_f=-\frac{y}{2}$ be the pseudo critical point in the dual problem and let the primal problem have a maximum of five critical points. Then
\begin{itemize}
\item if $\sigma_f$ is a local minimum for the dual function, there will be a local maximum in $\cal{S}\rm^+_\sharp$ that corresponds to a minimum of the primal problem.
\item if $\sigma_f$ is a local maximum then:
\begin{enumerate}
\item there are no critical points in  $\cal{S}\rm^+_\sharp$;
\item there is at least one critical point in $(\cal{S}\rm^+_\flat$
\end{enumerate}
\end{itemize}
\end{theorem}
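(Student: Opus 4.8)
The plan is to reduce the whole statement to the sign behaviour of the first-order condition (\ref{dud}), which reads $P^d(\sigma)'=-Q(\sigma)(2\sigma+y)$ with $Q(\sigma)=\frac{1}{2\alpha^2}\left(x-\frac{F(\sigma)}{G(\sigma)}\right)^2+\lns$. The observation that unlocks everything is the algebraic identity $x-\frac{F(\sigma)}{G(\sigma)}=\frac{x\beta}{G(\sigma)}$, which follows from $F(\sigma)=-x\sigma(\sigma+y)/\alpha^2$ and $G(\sigma)=\beta-\sigma(\sigma+y)/\alpha^2$. With it $Q(\sigma)=\frac{x^2\beta^2}{2\alpha^2G(\sigma)^2}+\lns$. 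Since the critical points of $P^d$ other than $\sigma_f$ are exactly the zeros of $Q$, and since (\ref{dudds}) amounts to $P^d(\sigma_f)''=-2Q(\sigma_f)$, the alternative ``$\sigma_f$ is a local minimum / local maximum of the dual'' is the same as ``$Q(\sigma_f)<0$ / $Q(\sigma_f)>0$''.

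Next I would establish that $Q$ is strictly increasing on $\calS^+_\sharp$. Because $G$ is a concave parabola in $\sigma$ with vertex at $\sigma_f=-\frac{y}{2}$, on $\calS^+_\sharp$ (where $\sigma>\sigma_f$ and $G>0$) the value $G(\sigma)$ is positive and strictly decreasing, so $1/G(\sigma)^2$ strictly increases, while $\lns=\ln\frac{\sigma+y}{w}$ strictly increases because $\sigma+y>0$; hence $Q'(\sigma)=\frac{x^2\beta^2(2\sigma+y)}{\alpha^4G(\sigma)^3}+\frac{1}{\sigma+y}>0$, every term being positive on $\calS^+_\sharp$. I would also record the boundary values of $Q$ on $\calS^+_a$: as $\sigma\to-y^+$ the logarithm tends to $-\infty$ while the first term stays finite, so $Q\to-\infty$; at the right end $\sigma_R$ of $\calS^+_a$ either $G\to0^+$ and $Q\to+\infty$, or $\sigma_R=w-y$, where $\lns=0$ and $Q(w-y)=\frac{x^2\beta^2}{2\alpha^2G(w-y)^2}>0$. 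In either case $Q(\sigma_R)>0$.

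For the first bullet, suppose $\sigma_f$ is a local minimum of the dual, i.e. $Q(\sigma_f)<0$. Monotonicity on $\calS^+_\sharp$ together with $Q(\sigma_R)>0$ gives a unique zero $\bar\sigma_1\in\calS^+_\sharp$ at which $Q$ passes from negative to positive; as $2\sigma+y>0$ there, $P^d(\sigma)'=-Q(\sigma)(2\sigma+y)$ changes sign from $+$ to $-$, so $\bar\sigma_1$ is a local maximum of $P^d$. Since $\bar\sigma_1\in\calS^+_a$ gives $G(\bar\sigma_1)>0$ together with $2\bar\sigma_1+y>0$, statement $1$ of Theorem \ref{sign} carries this dual local maximum to a local minimum of the primal, which is the claim. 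For the second bullet, suppose $\sigma_f$ is a local maximum, i.e. $Q(\sigma_f)>0$. Claim $(1)$ is then immediate: $Q$ starts at $Q(\sigma_f)>0$ and only increases on $\calS^+_\sharp$, so $Q>0$ there and $P^d$ has no critical point in $\calS^+_\sharp$. Claim $(2)$ is the intermediate value theorem on $\calS^+_\flat=(-y,\sigma_f)$ --- whose lower end is $-y$ because $\beta>0$ pushes the left root of $G$ below $-y$ --- where $Q$ runs from $-\infty$ to $Q(\sigma_f)>0$ and must vanish; the resulting zero $\bar\sigma_2$ is a genuine critical point because $G'(\bar\sigma_2)=-(2\bar\sigma_2+y)/\alpha^2\neq0$ on $\calS^+_\flat$, so Theorem \ref{criteq} attaches a primal critical point to it.

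The hypothesis that the primal has at most five critical points I would invoke only to close the global count: it guarantees that the points produced above, together with $\sigma_f$ and any critical points lying in $\calS^-_a$, exhaust all critical points, so that the local picture just described is the complete one rather than a mere lower bound. The real work is therefore the two structural facts that drive the argument --- the simplification $x-F/G=x\beta/G$ and the unconditional monotonicity of $Q$ on $\calS^+_\sharp$ read off from the vertex of the parabola $G$ --- after which only sign bookkeeping and appeals to Theorems \ref{criteq} and \ref{sign} remain. The only delicate points are the boundary analysis at the right end of $\calS^+_a$ (where confirming $Q(\sigma_R)>0$ needs $x\neq0$) and checking the side conditions $G'\neq0$ and $\bar\sigma\neq\pm\frac{y}{2}$ so that the dual extrema legitimately transfer to primal extrema.
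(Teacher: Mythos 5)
Your proof is correct, but it takes a genuinely different route from the paper's. The paper argues qualitatively: for the first bullet it invokes the singularity of $P^d$ where $G(\sigma)=0$ (the dual drops to $-\infty$ there, so a local minimum at $\sigma_f$ forces a local maximum in between); for claim (1) it runs a proof by contradiction combining the ordering of Theorem \ref{ins+} with the assumed bound of five primal critical points, reaching $P^d(\sigma_1)<P^d(\sigma_2)=P(c_2)<P(c_1)=P^d(\sigma_1)$; and for claim (2) it appeals directly to triality theory. You instead make everything hinge on the factorization $P^d(\sigma)'=-Q(\sigma)(2\sigma+y)$ together with the algebraic simplification $x-F(\sigma)/G(\sigma)=x\beta/G(\sigma)$, which turns $Q$ into $\frac{x^2\beta^2}{2\alpha^2G(\sigma)^2}+\lns$ and makes its strict monotonicity on ${\cal S}^+_\sharp$ transparent; the identity $P^d(\sigma_f)''=-2Q(\sigma_f)$ (immediate since $2\sigma_f+y=0$ kills the other term) converts the min/max dichotomy at $\sigma_f$ into a sign condition on $Q(\sigma_f)$, and all three claims then follow from the intermediate value theorem plus Theorems \ref{criteq} and \ref{sign}. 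Your route buys several things the paper's does not: claim (1) becomes a one-line consequence of monotonicity rather than a contradiction argument; the five-critical-point hypothesis is no longer load-bearing (it only closes the global count, as you note); uniqueness of the local maximum in the first bullet comes for free; and your endpoint analysis ($Q(\sigma_R)>0$ even when $\sigma_R=w-y$) covers the case where $G$ never vanishes on ${\cal S}_a$ (large $\beta$), where the paper's singularity argument is vacuous. The paper's proof, in exchange, is shorter and showcases the triality machinery that is the theme of the paper. Your flagged caveats ($x\neq 0$ at the right endpoint, $\beta>0$ pushing the left root of $G$ below $-y$ so that ${\cal S}^+_\flat=(-y,-\frac{y}{2})$, and the side conditions $G'\neq 0$, $\bar\sigma\neq-\frac{y}{2}$ for transferring extrema) are exactly the right ones, and none of them breaks the argument under the paper's standing assumptions $0<y<w$, $x>0$, $\beta>0$.
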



\begin{proof} {\em 
\noindent
In the dual problem there must be a singularity point in $G(\sigma)=0$ that goes to $-\infty$, so if $\sigma_f$ is a local minimum, there must be a local maximum in $\cal{S}\rm^+_\sharp$.

\noindent
If $\sigma_f$ is a local maximum, we prove condition $(i)$ by negating the thesis and suppose that there is a least one critical point in $\cal{S}\rm^+_\sharp$. As $P^d(\sigma)$ goes to $-\infty$ if $G(\sigma)\rightarrow 0$, there will be no one, but two critical points in $\cal{S}\rm^+_\sharp$, a local minimum $\sigma_1$ and a local maximum $\sigma_2$ with the relation $P^d(\sigma_1)<P^d(\sigma_2)$. For Theorems \ref{sign} and \ref{ins+}, $\sigma_1$ corresponds to the second highest local maximum of the primal function $c_1$, and $\sigma_2$ corresponds to the lowest or second lowest local minimum of the primal function $c_2$, that is the relation $P(c_2)<P(c_1)$ is satisfied. By Theorem \ref{criteq} we have:
\begin{equation}
P^d(\sigma_1)<P^d(\sigma_2)=P(c_2)<P(c_1)=P^d(\sigma_1)
\end{equation}
that is a contradiction.

\noindent
To prove condition $(ii)$, it is sufficient to notice that if there are no critical points in $\cal{S}\rm^+_\sharp$, for the triality theory there must be at least one critical point corresponding to the global minimum in $\cal{S}\rm^+_a$ and this point will be in $\cal{S}\rm^+_\flat$. \hfill  \qed }
\end{proof}
\vspace*{13pt}

\begin{figure}[ht]
\begin{center}
\includegraphics[scale=.49]{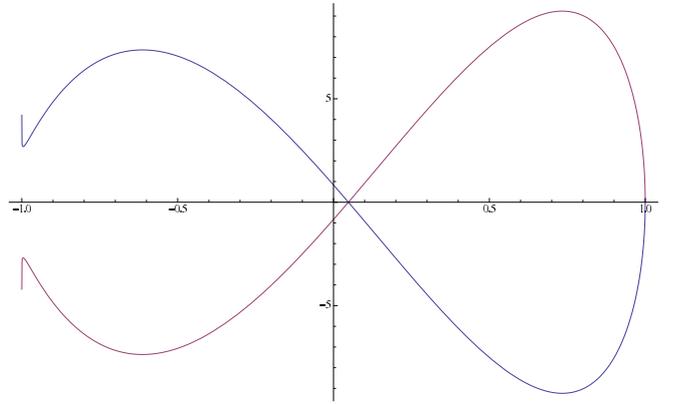}
\caption{Dual algebraic curves with $y=1$,  $w=2$, $\alpha=\frac{\sqrt{2}}{2}$ and $\beta=0.1$ in respect to the internal input $x$}
\label{dc}
\end{center}
\end{figure}

\begin{figure}[h]
\begin{center}
\includegraphics[scale=.28]{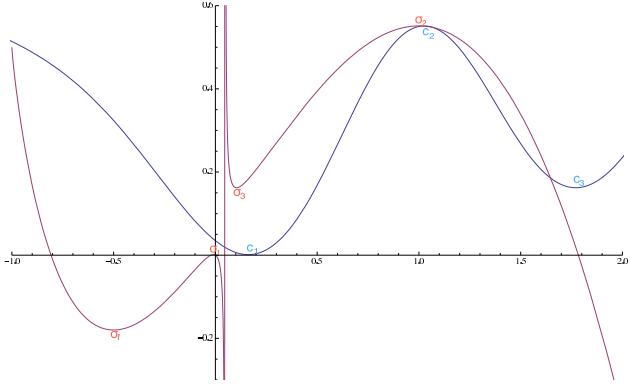}
\caption{Primal(in blue) and dual(in red) functions for Case 1 with three critical points}
\label{3r}
\end{center}
\end{figure}
Depending on the parameters, the primal problem (\ref{exp}) can have at most five critical points. There are several cases:

  {\bf Case 1}: Three critical points for $P(c)$ and four critical points for $P^d(\sigma)$, two critical point in $\cal{S}\rm^+_a$ and two critical points in $\cal{S}\rm^-_a$, with $\sigma_f$ as local minimum. The values of the parameters are $y=1$, $x=1$, $w=2$, $\alpha=\frac{\sqrt{2}}{2}$, $\beta=0.1$ (see Figure \ref{3r}). This case can be easily solved with the general canonical duality framework\cite{gaob}, as the local maximum in $\cal{S}\rm^+_a$ corresponds to the global minimum of the problem, and the local minimum and maximum in $\cal{S}\rm^-_a$ correspond to the local minimum and maximum in the primal problem.
\vspace*{13pt}

\begin{figure}[h]
\begin{center}
\includegraphics[scale=.29]{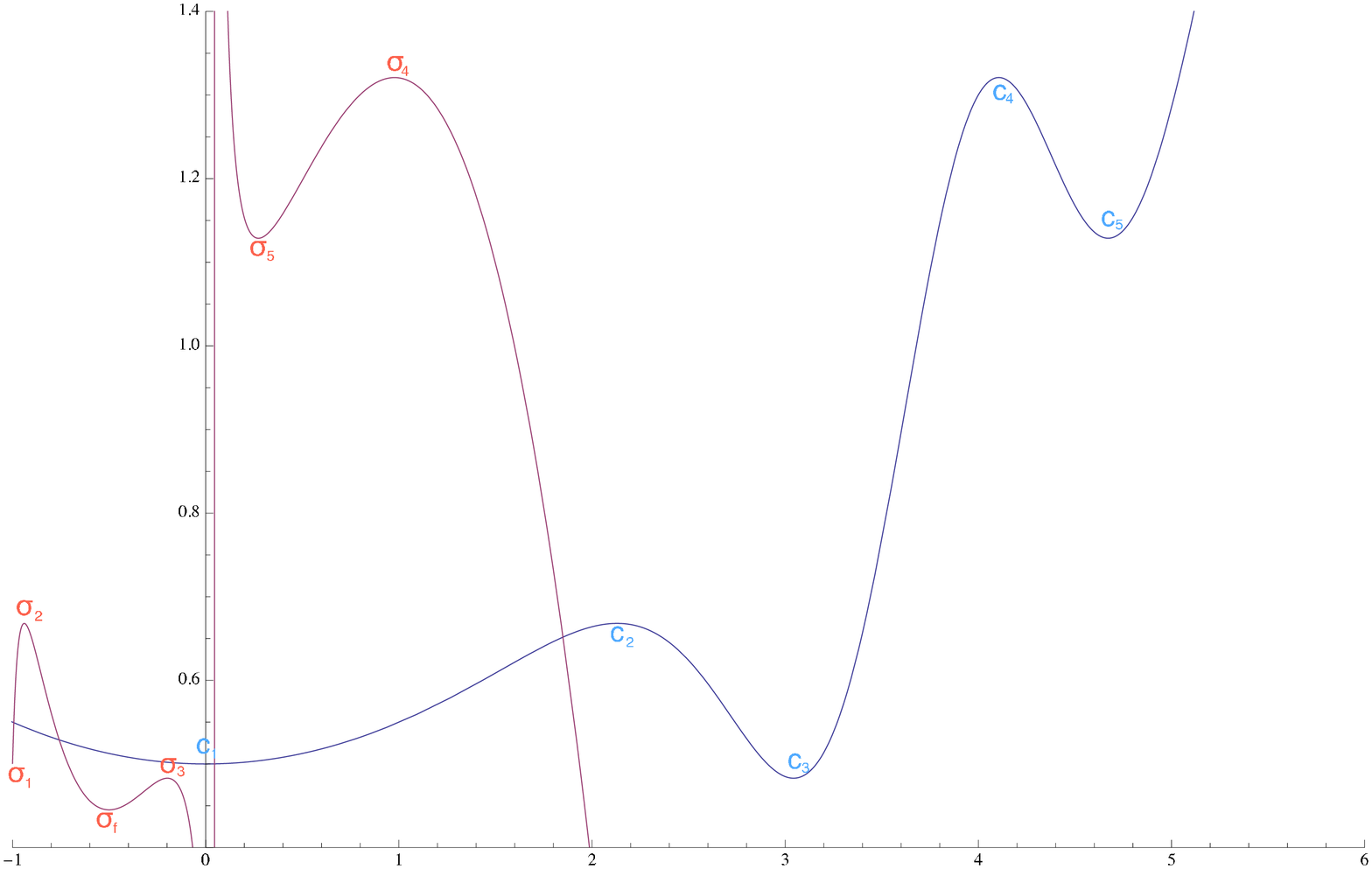}
\caption{Primal(in blue) and dual(in red) functions for Case 2 with five critical points in the primal and six critical points in the dual.}
\label{5r}
\end{center}
\end{figure}

\begin{figure}[h]
\begin{center}
\includegraphics[scale=.44]{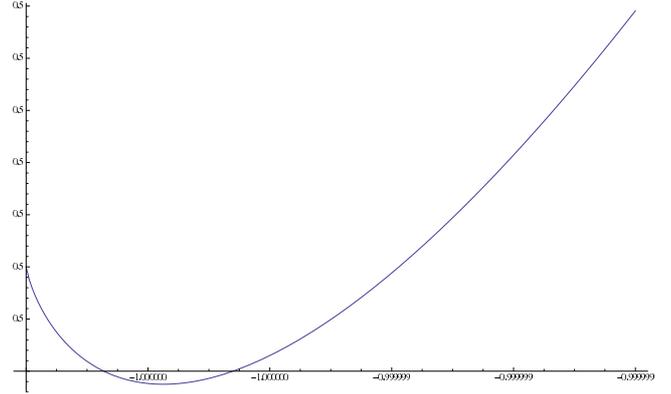}
\caption{Critical point on the boundary of the dual function feasible set for Case 2.}
\label{5min}
\end{center}
\end{figure}

\begin{figure}[h]
\begin{center}
\includegraphics[scale=.29]{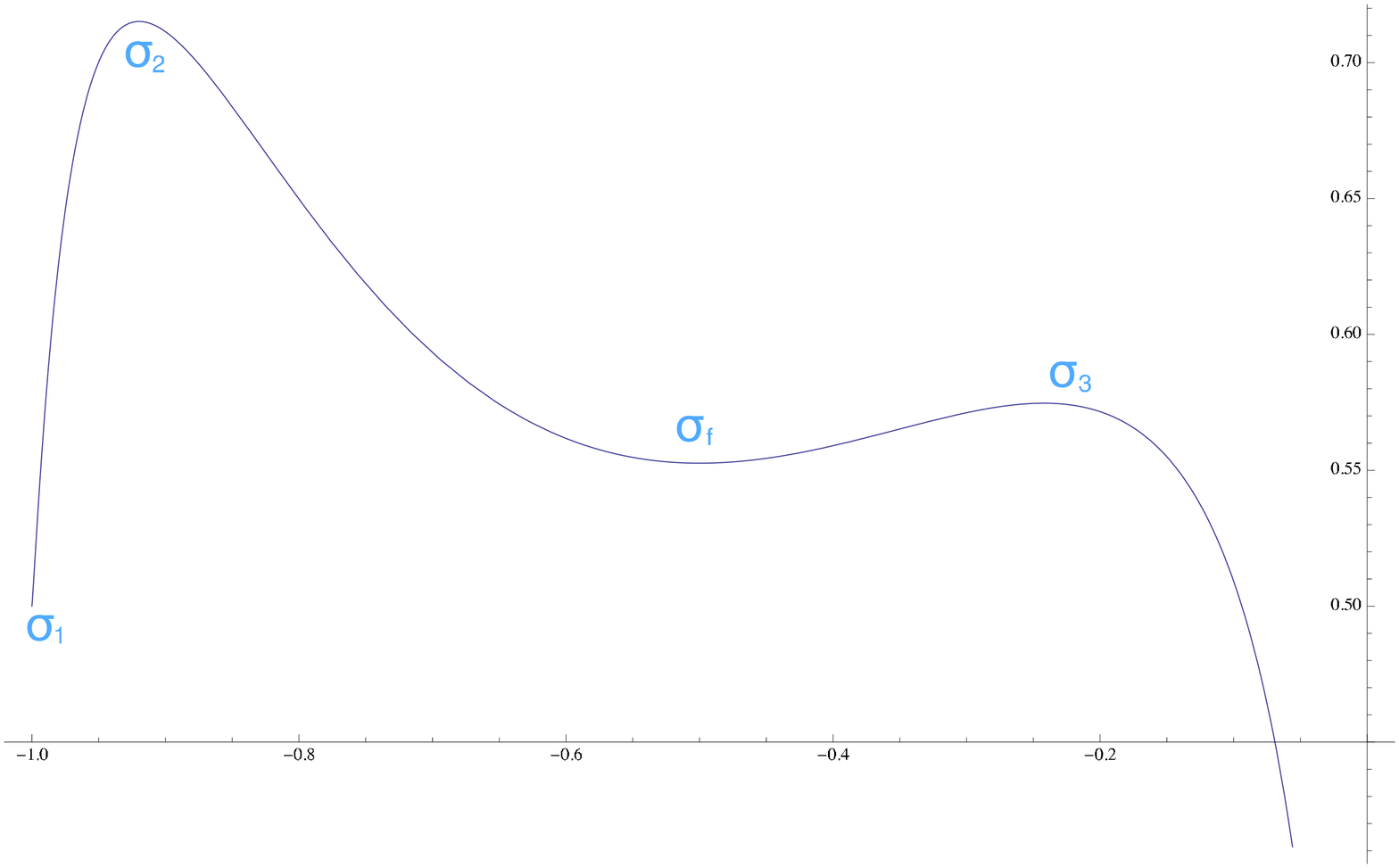}
\caption{$\cal{S}\rm^+_a$ of the dual problem in the case of $\beta=0.12$. The minimum near the boundary $\sigma_1$ is a global minimum.}
\label{5b12}
\end{center}
\end{figure}

{\em\bf Case 2}: Five critical points for $P(c)$,  six critical points for $P^d(\sigma)$. The values of the parameters are  $y=1$, $x=4$, $w=2$, $\alpha=\frac{\sqrt{2}}{2}$ and $\beta=0.1$ (see Figure \ref{5r}). Notice that the only parameter that changed in respect to Case $1$ is $x$. With these parameters the problem becomes multi-welled. The two critical points with the lowest value of the objective function belong to the same double well and their corresponding critical points are in $\cal{S}\rm^+_a$.
 The critical point $\sigma=-0.999999$  of  $P^d(\sigma)$ is  corresponding to the second best minimizer  $c=0.00002$  of the primal problem and this $\sigma$   is situated near the boundary of ${\cal S}^+_{\flat}$
 which is visible in Figure \ref{5min}.
\noindent
It is also possible, for certain values of the parameters, that the local minimum on the boundary of $S_a$, corresponds to the global minimum of the problem (see Figure \ref{5b12}).
\noindent
In this case the choice of the value for $\sigma$ should be the critical point near the boundary. This critical point  corresponds to a critical point in the primal with the value of $c$ near zero. This critical point is generated by the term $\frac{1}{2} \beta  c^2$ that is the regularization term used to make the objective function coercive and more regular. On the other hand, this term doesn't have anything to do with the original aim of the problem. This point near zero in the primal function will always have the corresponding dual critical point near the boundary, because as $c$ gets close to zero, $\sigma=w\exp\left\{\dis\right\}-y$ gets close to $-y$. We also consider that $\sigma=w\exp\left\{\dis\right\}-y$ is the error that originally we want to minimize in problem (\ref{W1}) and that the critical point on the boundary will always have a $\sigma$ with an absolute value bigger than the other critical point closer to $\sigma=0$. In other words the local minimum on the boundary has nothing to do with the original problem, has an high value of the error and should not be considered as a good solution. In order to find the optimal solution for the original problem, the local minimum in the primal problem corresponding to the critical point closer to zero in $S_a^+$ is preferable. By reducing the value of $\beta$ it is possible not only to make the critical point near $c=0$ into a local minimum, but also to assure that $\sigma_f$ is a local minimum. In this way there is a critical point in $\cal{S}\rm^+_\sharp$ and the domain of the solution is well defined. Basically if the critical point near the boundary of $S_a^+$ is the global minimum, a very big value of $\beta$ has been chosen.

\begin{figure}[h]
\begin{center}
\includegraphics[scale=.42]{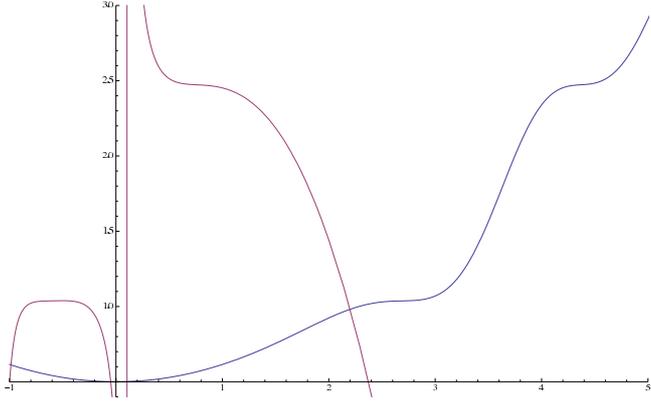}
\caption{Primal(in blue) and dual(in red) functions for the Case 3 with three critical points in the primal and four critical points in $\cal{S}\rm^+_a$.}
\label{3b.22}
\end{center}
\end{figure}

{\em\bf Case 3}: Three critical points for $P(c)$ and four critical points for $P^d(\sigma)$, all belonging to $\cal{S}\rm^+_a$. The values of the parameters are  $y=1$, $x=4$, $w=2$, $\alpha=\frac{\sqrt{2}}{2}$ and $\beta=0.22$ (see Figure \ref{3b.22}).  This case is similar to the previous one, and the solution of the dual problem should be the critical point that corresponds to a minimum in the primal problem with the value of $\sigma$ closer to zero.

\begin{figure}[h]
\begin{center}
\includegraphics[scale=.42]{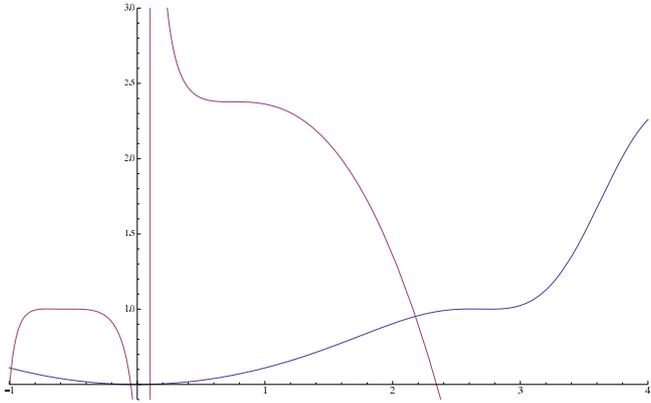}
\caption{Primal(in blue) and dual(in red) functions for the Case 4 with three critical points in the primal and two critical points in $\cal{S}\rm^+_a$ and two critical points in $\cal{S}\rm^-_a$ and $\sigma_f$ as a local maximum.}
\label{3nr}
\end{center}
\end{figure}

{\em\bf Case 4}: Three critical points in the primal and four critical points in the dual, but with two critical points in $\cal{S}\rm^+_a$, two critical points in $\cal{S}\rm^-_a$ and $\sigma_f$ as local maximum. The values of the parameters are $y=1$, $x=8$, $w=2$, $\alpha=\frac{\sqrt{2}}{2}$ and $\beta=0.25$ (see Figure \ref{3nr}).  If the value of the hyper parameter $\beta$ is reduced it is possible to make $\sigma_f$ into a local minimum and return in one of the previous cases.

{\em\bf Case 5}: One critical point in the primal problem and two critical points in the dual problem. This case occurs when the quadratic term with beta dominates the error function $W(x)$. If this case occurs, it means that the  value of $\beta$ is too big and the problem is not related with the original anymore, so one should choose a smaller value of $\beta$ to have a problem related to the original.

\noindent
Based on the study of these cases, we can obtain the general idea to find the best solution, i. e. the hyper parameter $\beta$ should be set to a value that satisfies condition (\ref{duddcon}) in order to have $\sigma_f$ as a local minimum, then search for the critical point in the domain $\cal{S}\rm^+_\sharp$.

\section{Conclusions}

In this paper we have presented an application of the canonical duality theory to function approximation using Radial Basis Functions. By using the sequential dual canonical transformation, the non convex problem with a general RBF function $\phi(\cdot)$ is reformulated in a canonical dual form. An associated strong duality theorem is also proposed.

\noindent
Applications to one of the most used RBF, the exponential function, are illustrated. Due to the particular properties of the exponential function, we are able to find a linear relation between the dual variables, which leads to an explicit form of the canonical dual problem. We also found conditions on the hyper parameter $\beta$ in order to obtain a reliable domain where to search for the best solution.
This research reveals an important phenomenon in complex systems, i.e. the global optimal solution may not be 
the best solution to the problem considered.  

\noindent
There are still several open topics on the application of the canonical duality theory to Radial Basis Error functions. For example there are other kinds of RBF that can be analyzed, like the multi quadratic  and the multi quadratic inverse functions, a further development for future research is to expand the one dimensional case to the multidimensional case with also  considering $w$ as a variable and not as a parameter. When this case is analyzed, we will be able to realize RBF neural networks based on canonical duality theory.

\vspace*{13pt}

\end{document}